\newcommand{\tangyu}[1]{{#1}}
\newcommand{\method}{ComRoPE}
\newcommand{\switch}[2]{#2}
\newtheorem{theorem}{Theorem}
\newtheorem{lemma}{Lemma}
\newtheorem{definition}{Definition}
\newtheorem{proposition}{Proposition}
\newtheorem{remark}{Remark}
\useunder{\uline}{\ul}{}
\def\eqref#1{equation~\ref{#1}}
\def\1{\bm{1}}
\DeclareMathAlphabet{\mathsfit}{\encodingdefault}{\sfdefault}{m}{sl}
\SetMathAlphabet{\mathsfit}{bold}{\encodingdefault}{\sfdefault}{bx}{n}
\definecolor{cvprblue}{rgb}{0.21,0.49,0.74}
\title{\method: Scalable and Robust Rotary Position Embedding Parameterized by Trainable Commuting Angle Matrices}
\author{
    \textbf{Hao Yu}$^{1}$  \quad
    \textbf{Tangyu Jiang}$^{1\dagger}$  \quad
    \textbf{Shuning Jia}$^{1,2}$ \quad
    \textbf{Shannan Yan}$^{1}$  \quad
    \textbf{Shunning Liu}$^{1}$  \\
    \textbf{Haolong Qian}$^{1}$  \quad
    \textbf{Guanghao Li}$^{1}$  \quad
    \textbf{Shuting Dong}$^{1}$ \quad
    \textbf{Huaisong Zhang}$^{1}$ \quad
    \textbf{Chun Yuan}$^{1\dagger}$ \quad
    \\[.6em]
    $^1$Tsinghua University \quad
    $^2$Shenzhen University 
    \\[.5em]
    {\tt\small 
        longinyh@gmail.com, {jiangtangyu,yuanc}@sz.tsinghua.edu.cn
    }
}
\begin{document}
\maketitle
\newcommand\blfootnote[1]{%
  \begingroup
  \renewcommand\thefootnote{}\footnote{#1}%
  \addtocounter{footnote}{-1}%
  \endgroup
}
\blfootnote{$^\dagger$Corresponding author. This work was done when Shuning Jia was an intern at Tsinghua University.}

\vspace{-2em}
\begin{abstract}
The Transformer architecture has revolutionized various fields since it was proposed, where positional encoding plays an essential role in effectively capturing sequential order and context.
Therefore, Rotary Positional Encoding (RoPE) was proposed to alleviate these issues, which integrates positional information by rotating the embeddings in the attention mechanism.
However, RoPE utilizes manually defined rotation matrices, a design choice that favors computational efficiency but limits the model's flexibility and adaptability.
In this work, we propose \method{}, which generalizes \textbf{RoPE} by defining it in terms of trainable \textbf{com}muting angle matrices.
Specifically, we demonstrate that pairwise commutativity of these matrices is essential for RoPE to achieve scalability and positional robustness. 
We formally define the RoPE Equation, which is an essential condition that ensures consistent performance with position offsets. 
Based on the theoretical analysis, we present two types of trainable commuting angle matrices as sufficient solutions to the RoPE equation,
which significantly improve performance, surpassing the current state-of-the-art method by 1.6\% at training resolution and 2.9\% at higher resolution on the ImageNet-1K dataset.
%, while also achieving solid results across other benchmark datasets.
Furthermore, our framework shows versatility in generalizing to existing RoPE formulations and offering new insights for future positional encoding research. 
To ensure reproducibility, the source code and instructions are available at \url{https://github.com/Longin-Yu/ComRoPE}.
\end{abstract}
    
\vspace{-1.5em}
\section{Introduction}
\label{sec:intro}

% \todo{Background of Transformers, LLMs, and MLLMs}

% \todo{Position Embeddings in Transformers}

% \todo{Challenge of Current Position Embedding}

% \todo{Introduction of RoPE and its variaties}

% \todo{Shortcoming of RoPE: Fixed setting, gap of different rope, lack of robustness and scalibility}

% \todo{Our method}

% \todo{Contribution}

% Our contributions are fourfold: 1) We unify various existing formulations of RoPE, defining rotary positional encoding as a problem of angle matrices; 2) We propose Scalable and Robust RoPE with Trainable Angle Matrices and introduced three types of trainable angle matrices; 3) Through experiments, we demonstrate the effectiveness of our method, achieving performance surpassing \todo{baseline + ratio} on the \todo{} task; 4) We explore potential applications of our method, providing insights for future work.

The Transformer architecture \citep{vaswani2017attention} has been widely adopted across various fields, including Natural Language Processing (NLP) \citep{Devlin2019BERTPO, radford2018improving, radford2019language, brown2020language, glm2024chatglm} and Computer Vision (CV) \citep{VIT2020}. Moreover, an increasing number of Transformer-based applications \citep{zhang2024distilling, li2025llava, chen2024gim, zhang2025rethinking, liuagentbench, zhang2025imdprompter, liu2023webglm, zhang2024can, lai2024autowebglm} continue to demonstrate their effectiveness across a wide range of domains.
At the core of the Transformer model lies the attention mechanism, which enables the model to selectively focus on different parts of the input based on relevance, thereby effectively capturing long-range dependencies and contextual relationships.

However, since the attention mechanism is insensitive to the fundamental position information, 
it cannot inherently capture the order of elements in the data. 
In order to alleviate this issue, positional embeddings are added to the input representation, providing the model with the necessary positional information to process the sequence order and structure effectively.

A variety of positional encoding methods have been proposed, which can be divided into two categories: Absolute Positional Encoding (APE) and Relative Positional Encoding (RPE).
APE explicitly encodes the absolute position of each token by generating fixed positional embeddings, which are directly added to the input embeddings at the beginning of training \cite{vaswani2017attention}. However, APE struggles with handling long sequences and exhibits high sensitivity to positional shifts, limiting the scalability and robustness of models \cite{kazemnejad2023impact}. In contrast, RPE does not directly modify input embeddings; instead, it incorporates relative positional information within the attention mechanism, enabling the model to effectively capture positional relationships between tokens.
%RPE methods enable models to , 
% Though effective, the local window restricts the attention mechanism to a fixed size, thus hinders  the size of the attention capturing long-range dependencies \cite{Liu2021SwinTH, raffel2020exploring}. 
%but most of them also introduce the limitation of restricting attention to a fixed window size, which hinders 
Among all the existing works, Rotary Position Embedding (RoPE) \cite{su2024roformer} has gained significant attention due to the advantage of applying a rotational transformation to token embeddings. 
It encodes relative positions by treating each pair of features as coordinates and rotating them by an angle proportional to their position, enabling all tokens to interact within the attention mechanism regardless of distance. 

However, existing RoPE methods face several key challenges: 
i) The essential components (i.e., the RoPE matrices) of previous RoPE approaches rely on 2D rotation groups, which simplify computations but consequently restrict their feature projection capabilities, especially in high-dimensional spaces \citep{su2024roformer}.
ii) Moreover, the majority of the rotation matrices of RoPE require to be manually designed, leading to insufficient capability and suboptimal performance\citep{press2021train,raffel2020exploring}.
iii) Finally, previous attempts \citep{ostmeier2024liere} to extend the rotation group often prioritize design simplicity, making it difficult to consistently satisfy relative position dependency—a critical property of RoPE that ensures positional robustness against offsets.

\vspace{.5em}
\noindent\textbf{Our objectives.}
This work aims to develop a novel RoPE method for Transformers that is both scalable and robust. 
Specifically, we seek to extend the rotation group from the existing 2D representation to a larger subgroup of the special orthogonal group, allowing for higher degrees of freedom 
while preserving consistent behavior with respect to position offsets.
Unlike existing methods that rely on manually designed non-trainable rotation matrices, which suffer from limited expressiveness and reduced robustness, 
our approach is designed to offer richer feature representation capabilities.
This framework addresses the scalability limitations of current approaches and enhances their robustness against positional transformations.

% However, existing variations of RoPE, such as RoPE-Mixed \citep{heo2024rotary} and LieRE \citep{ostmeier2024liere}, face their own challenges. RoPE-Mixed proposes a trainable two-dimensional solution, but its model expressiveness is limited, reducing scalability. LieRE, which rotates features in a multi-dimensional coordinate space (hypersphere), sacrifices the translation invariance that RoPE initially aimed to preserve, weakening its positional robustness.

\vspace{.5em}

\noindent\textbf{Our contributions.} This work introduces \method{}, a novel framework that significantly enhances positional encoding in Transformers. \method{} leverages trainable angle matrices, extending the RoPE mechanism with higher scalability and robustness. We identify the pairwise commutativity of these matrices as a necessary and sufficient condition for effective positional encoding, thereby unifying various existing RoPE formulations under a single theoretical framework. The contributions are summarized as follows:

\begin{itemize}
   \item We formally define the RoPE function parameterized by angle matrices and prove that pairwise commutativity is a necessary and sufficient condition, offering a unified theory that encompasses several existing RoPE variants.
   \item We introduce \method{}, a scalable and robust solution that leverages two types of trainable commuting angle matrix sets as sufficient solutions to the RoPE equation, capturing richer positional representations without the need for manual design.
   \item Our extensive experiments show that \method{} surpasses the current state-of-the-art LieRE, achieving a performance increase of $1.6\%$ at training resolution and $2.9\%$ at higher resolutions on the ImageNet-1K classification task while delivering strong results across other benchmarks.
   \item We explore further applications of \method{}, providing valuable insights for advancing position encoding techniques in future Transformer-based models.
\end{itemize}

\vspace{-.4em}
\section{Related work}
\label{sec:formatting}

% https://arxiv.org/pdf/2403.13298

\subsection{Position information in attention}

% Transformer \citep{vaswani2017attention} 使用注意力机制来捕捉序列中的相似性，但它缺乏固有的顺序信息,无法捕获每个token的位置信息，为了解决这个问题，提出了位置编码方法。随着研究的不断深入，位置编码通常分为绝对位置编码和相对位置编码。 \todo{较短，1 到 2 句话}
Transformers \citep{vaswani2017attention} utilize the attention mechanism to capture similarities within sequences. However, they lack inherent sequential information and cannot capture the positional information of each token. To address this limitation, positional encoding \citep{Devlin2019BERTPO,T52020,shaw2018self} was introduced. As research progressed, positional encoding generally evolved into two types: APE \citep{vaswani2017attention,sukhbaatar2015end,Devlin2019BERTPO} and RPE \citep{shaw2018self,su2024roformer,horn2021translational,liutkus2021relative}.
% Vaswani et al在提出Transformer模型时首次引入了正弦和余弦函数形式的绝对位置编码,有效表示了输入序列的位置关系，这种位置编码在自然语言处理中取得了显著的效果，成为了许多自然语言处理任务的基础模块。然而，固定的位置编码方法限制了模型在更大输入序列上的泛化能力，后续研究者提出了可学习的绝对位置编码。Devlin等人提出的BERT模型通过引入可学习的位置嵌入进一步提升了模型的表达能力，尤其是在句子对齐和上下文表示方面表现出色。
\citep{vaswani2017attention} first proposed using APE in the form of sine and cosine functions, effectively representing the positional relationships within the input sequence. This positional encoding method achieved remarkable results in natural language processing, becoming a foundational component for many NLP tasks. However, the fixed nature of positional encoding limited the model's ability to generalize to longer input sequences. In response, subsequent research introduced learnable absolute positional encoding. \citep{Devlin2019BERTPO} proposed further enhancing the model's expressive power by incorporating learnable position embeddings, particularly excelling in tasks such as sentence alignment and context representation.
% 尽管绝对位置编码能够提供位置信息以增强模型对序列性数据的理解，但其在长序列和跨序列场景中表现有限。Hierarchical ViT such as Swin Transformer \citep{swintransformer} 引入了相对位置偏置(Relative Position Bias, RPB)以处理具有有限位置嵌入的大量token. 相关研究也探索了其他编码方式，如相对位置编码（RPE），用于替代绝对位置编码，以适应更复杂的依赖关系结构。iRPE通过结合相对位置嵌入提出了改进的 RPB
Although APE provides positional information to enhance the model's understanding of sequences, it shows limitations in handling long sequences and cross-sequence scenarios. Hierarchical ViT such as Swin Transformer \citep{swintransformer}, introduced Relative Position Bias (RPB) \citep{swintransformer,raffel2020exploring,shaw2018self} to handle large numbers of tokens with limited positional embeddings \citep{heo2024rotary}. Related research has explored alternative encoding methods, such as RPE, as a replacement for APE to better capture complex dependency structures. iRPE \citep{wu2021rethinking} proposed an improved RPB by combining relative position embedding.

%绝对位置编码（Absolute Positional Encoding）是一种用于序列模型的技术，旨在将输入数据的位置信息显式地嵌入到模型中，使模型能够理解序列中各个元素之间的顺序。绝对位置编码的主要实现分为训练范式和非训练式.

% 训练式： 绝对位置编码的一个最直观的方案是直接将位置编码当作可训练参数，ViT 引入了用于视觉输入的 Transformer 架构，使用了绝对位置嵌入（APE）.通过引入可学习的 APE 参数，能够有效地将每个标记的空间位置信息注入到自注意力机制中.对于这种训练式的绝对位置编码，一般的认为它的缺点是没有外推性，即如果预训练最大长度，那么超出的部分则无法处理.

%三角式位置编码（Sinusoidal Positional Encoding）由Vaswani等人在Transformer模型中首次提出。这种编码方式通过使用正弦和余弦函数来为每个位置生成固定的编码，从而保证不同位置有唯一的表示。三角式编码的显著特点是其不依赖于训练，且具有周期性，因此在处理较长序列时表现出色。然而，其固有的固定形式可能会限制模型在某些复杂任务中的表现. 

% \todo{APE and RPE, 1 paragraph}

\subsection{Rotary position embedding}
% \todo{RoPE}
% 为了适应更复杂的视觉任务，Hierarchical ViT such as Swin Transformer \citep{swintransformer} 引入了相对位置偏置(Relative Position Bias, RPB)以处理具有有限位置嵌入的大量token.   在这些工作基础之上，研究人员开始探索Rotary Position Encoding (ROPE),Roformer融合了绝对位置编码和相对位置编码的优点，提出了旋转位置编码(ROPE).目前，旋转位置嵌入（RoPE）已是LLM中常用的相对位置编码技术，例如LLaMA、Vicuna等,它在处理长文本语义和多轮对话等任务时具有优势，能够提高模型的外推能力.
Building on these developments, RoFormer \citep{su2024roformer} combined the advantages of APE and RPE, proposing RoPE. Currently, RoPE is widely used in Large Language Models (LLMs), such as LLaMA \citep{touvron2023llama} and Vicuna \citep{chiang2023vicuna}. This approach enhances model performance on tasks involving long-text semantics and multi-turn dialogues, improving extrapolation capability.
% 相对位置编码主要关注序列中两个token之间的相对位置关系，而不是像绝对位置编码那样为每个位置分配一个固定的编码.Roformer融合了绝对位置编码和相对位置编码的优点，提出了旋转位置编码(ROPE)，通过使用旋转矩阵对绝对位置进行编码，并在自注意力机制中加入了明确的相对位置依赖性.
% Relative position encoding focuses on the relative position relationship between two tokens in the sequence, rather than assigning a fixed encoding to each position like absolute position encoding. Roformer  \citep{su2024roformer} combines the advantages of absolute position encoding and proposes rotational position encoding (RoPE), which encodes the absolute position by using a rotation matrix and incorporates an explicit relative position dependency in the self-attention mechanism \citep{su2024roformer}.
% 目前，旋转位置嵌入（RoPE）已是LLM中常用的相对位置编码技术，例如LLaMA、Vicuna等,它在处理长文本语义和多轮对话等任务时具有优势，能够提高模型的外推能力.
% Currently, RoPE has become a commonly used relative position encoding technique in Large Language Model (LLM), such as LLaMA \citep{touvron2023llama}, Vicuna \citep{chiang2023vicuna}, etc. It has advantages in processing long text semantics and multi-round dialogue tasks, and can improve the extrapolation ability of the model.
%  为了更好地适应图像等二维数据的特点，研究人员将 RoPE 推广到了二维序列(2D RoPE) .iRPE研究了相对位置编码在图像中的重要性,提出了专用于2D图像的新相对位置编码 \citep{wu2021rethinking};LieRE \citep{ostmeier2024liere} 将旋转位置编码（RoPE）扩展为更通用的形式，提出了一种基于李群理论的旋转位置编码方法;\cite{Unified-IO 2} 将2D-RoPE用于多模态结构中;EVA-02, FiT 这些先驱性工作使用了轴向频率的2D RoPE, 但仅使用2D RoPE不足以全面描述图像中的空间位置关系，特别是对对角线方向的处理存在局限性,因此, rope for vit提出了对2D RoPE使用混合轴频率(propose to use mixed axis frequencies for 2D RoPE), (named RoPE-Mixed);       
To better adapt to the characteristics of two-dimensional data such as images, researchers extended RoPE to two-dimensional sequences (2D-RoPE). LieRE \citep{ostmeier2024liere} extends RoPE to a more generalized form by introducing a rotation-based positional encoding method grounded in Lie group theory. Unified-IO 2 \citep{lu2024unified} applies 2D-RoPE within its multimodal architecture; EVA-02 \citep{fang2024eva}, FiT \citep{fang2024eva} these pioneering works used 2D RoPE with axial frequencies (2D Axial RoPE), but had limitations in processing in the diagonal direction. Therefore, RoPE for ViT \citep{heo2024rotary} proposes to use mixed axial frequency for 2D RoPE, named RoPE-Mixed.
\section{Method}

In this section, we begin by introducing key definitions and reformulating the RoPE paradigm within multi-axial attention mechanisms, which we collectively refer to as the RoPE Equations.
Next, we present our main theorem, which establishes the necessary and sufficient conditions for RoPE functions parameterized by angle matrices.
Finally, we provide solutions to the RoPE Equations based on the propositions that outline the sufficient conditions.

% \section{Method}

% In this section, we begin by introducing several essential definitions, with reformulating and unifying the rotary position embedding paradigm in multi-axial attention mechanisms, which we refer to as the RoPE Equations.
% Then we propose our main theorem that indicates the necessary and sufficient conditions of RoPE functions parameterized by angle matrices.
% We then present our solutions to the RoPE Equations based on the propositions witch are sufficient conditions.

\subsection{Preliminaries}
% \update{To Check}

\tangyu{We use $\mathbf{R}(\;\cdot\;): \mathbb{R}^N \rightarrow \mathbb{R}^{d\times d}$ to denote a matrix-value function. We denote the vectors and matrices as the lower and uppercase bold font, respectively. 
We first recall two fundamental definitions of matrices as follows:

\begin{definition}[Matrix Exponential]
The exponential of a square matrix \( \mathbf{A} \), denoted as \( e^{\mathbf A} \) or \(\exp(\mathbf A)\), is defined using the matrix exponential series such that:
\[
e^\mathbf{A} = \exp(\mathbf{A}) = \sum_{k=0}^{\infty} \frac{\mathbf{A}^k}{k!} = \mathbf I + \mathbf{A} + \frac{\mathbf{A}^2}{2!} + \frac{\mathbf{A}^3}{3!} + \cdots,
\]
This series converges for any square matrix \( \mathbf{A} \).
\end{definition} 

\begin{definition}[Commuting Matrices]
Two square matrices \( \mathbf{A}_1 \) and \( \mathbf{A}_2 \) are said to \textbf{commute} if their product is independent of the order of multiplication, i.e.,
   \[
   \mathbf{A}_1 \mathbf{A}_2 = \mathbf{A}_2 \mathbf{A}_1.
   \]
A set of square matrices \( \{\mathbf{A}_1, \mathbf{A}_2, \cdots, \mathbf{A}_N\} \) is said to \textbf{pairwise commute} if every pair of matrices within the set commutes with each other. That is, for all \( i, j \) such that \( 1 \leq i, j \leq N \),
   \[
   \mathbf{A}_i \mathbf{A}_j = \mathbf{A}_j \mathbf{A}_i.
   \]
\end{definition}

To clarify and better illustrate the main theorems presented in the following sections, we first reformulate and unify the definitions of the RoPE paradigm, specifically in the context of multi-axial attention mechanisms. RoPE was initially proposed by \citet{su2024roformer} as a positional encoding method based on relative position dependencies. However, previous work provided only conceptual and descriptive descriptions of RPE and RoPE without offering a rigorous formal definition. In this work, we provide the formal definitions of both RPE and RoPE.

% and propose the following definitions 

% of positional encoding mechanism that is based on the relative position dependency of tokens, such that

\begin{definition}[RPE Equation]
Let  
$f: \mathbb{R}^{d}\times\mathbb{R}^N\rightarrow\mathbb{R}^d$
be a positional encoding function,
and $\rho: \mathbb{R}^{d}\times\mathbb{R}^{d}\rightarrow\mathbb{R}$
be a similarity function.
$f$ is said to be a \textbf{RPE function} if and only if
there exists a function 
$g: \mathbb{R}^{d}\times\mathbb{R}^{d}\times\mathbb{R}^N\rightarrow\mathbb{R}$ 
such that the following conditions hold for all $\bm x, \bm y \in \mathbb{R}^N$ and $\bm q, \bm k \in \mathbb{R}^d$:
\begin{equation}
    \label{eq:rpe-equation}
    g(\bm q,\bm k,\bm x-\bm y)=\rho(f(\bm q,\bm x), f(\bm k, \bm y)),
\end{equation}
We refer to \cref{eq:rpe-equation} as the \textbf{RPE Equation}.
\end{definition}

}

% RoFormer \citep{su2024roformer} introduces RoPE, proposing a positional enconding methods based on relative position dependency.
% Initially, RoPE is proposed in large language models, where one-axial position is adopted. In computer vision, multi-axial coordinates are applicable. 
% For example, 2D coordinates for images and 3D coordinates for spatial modeling and videos. 
% Here, we propose RoPE Equation to constraint that the positional encoding mechanism is only based on the relative position of tokens.

\begin{figure*}[t]
    \centering
    \includegraphics[width=0.85\linewidth]{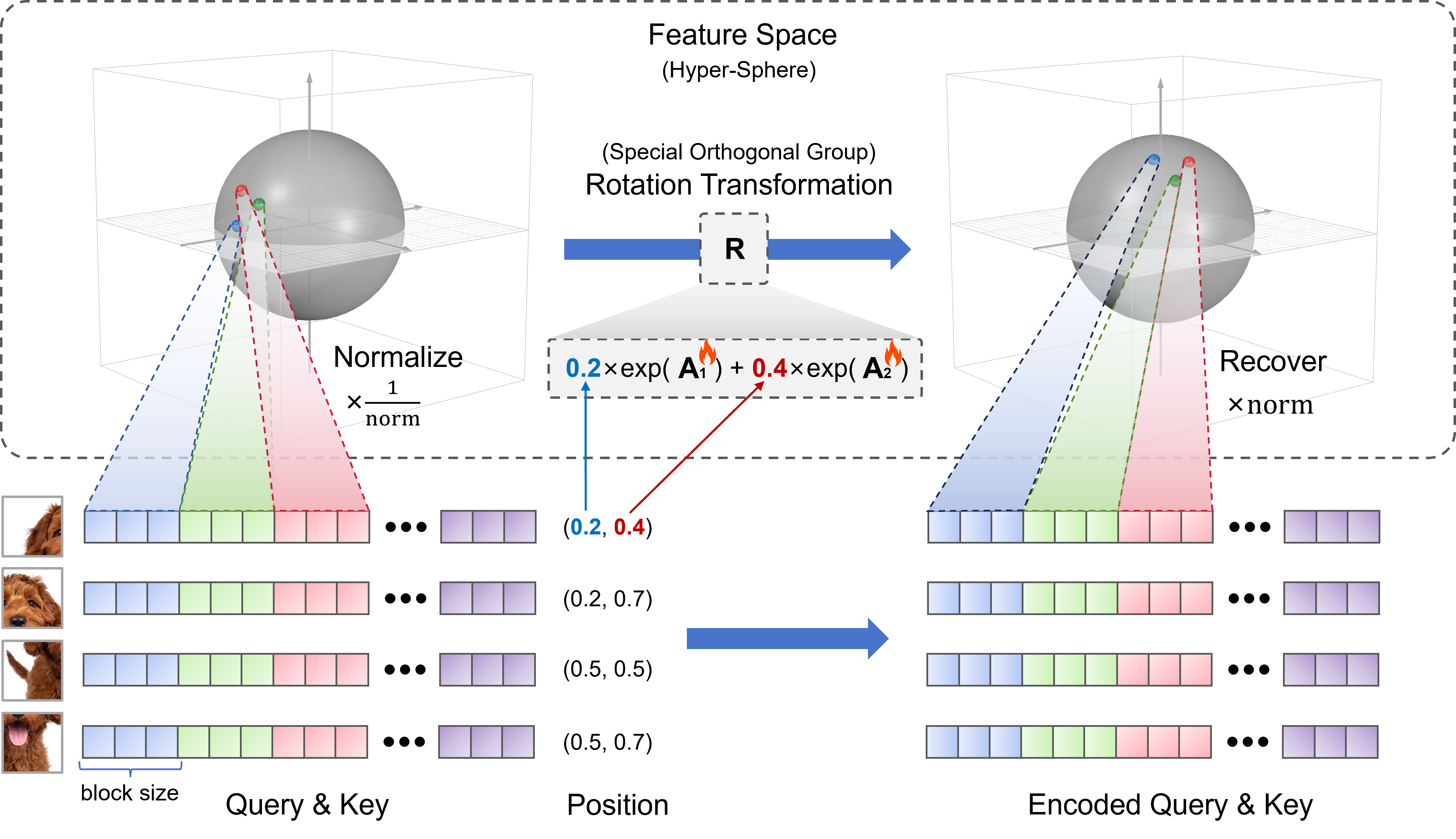}
    \caption{Overview of \method{}. Features are arranged into several blocks, each representing a distinct point in the feature space. The positions, along with the angle matrices, define the rotation matrix, which is an element of the special orthogonal group. The rotation transformation projects a feature point onto another point on the surface of the same hypersphere.}
    \label{fig:overview}
\end{figure*}

\begin{definition}[RoPE Equation]
Let 
$f: \mathbb{R}^{d}\times\mathbb{R}^N\rightarrow\mathbb{R}^d$
be an RPE function.
$f$ is said to be a \textbf{RoPE function} if and only if
there exists a matrix-valued function
$\mathbf{R}_f(\;\cdot\;)$
such that the following conditions hold for all $\bm x, \bm y \in \mathbb{R}^N$ and $\bm q, \bm k \in \mathbb{R}^d$ in RPE Equation:
\begin{equation}
\label{eq:rope-equation}
    \left\{
    \begin{array}{l}
        \vspace{1ex}
        f(\bm q, \bm x) = \mathbf{R}_f(\bm x) \bm q\\
        \vspace{1ex}
        \rho(\bm q, \bm k) = \bm q^\top \bm k \\
        g(\bm q,\bm k, \bm x-\bm y) = \bm q^\top \mathbf{R}_f(\bm y-\bm x) \; \bm k
    \end{array} 
    \right.
\end{equation}
We refer to \cref{eq:rope-equation} as the \textbf{RoPE Equation}.
\end{definition}

By substituting the RoPE equation into the RPE equation, we obtain that the RoPE function satisfies the following property.

\begin{proposition}\label{prop:rope}
% \todo{This should be a proposition, a remark, or a property?}
\( f \) is said to be a RoPE function if and only if the matrix-valued function \( \mathbf{R}_f(\;\cdot\;) \) satisfies:
\[
\mathbf{R}_f(\bm{x})^\top \mathbf{R}_f(\bm{y}) = \mathbf{R}_f(\bm{y} - \bm{x}),
\]
for all \( \bm{x}, \bm{y} \in \mathbb{R}^N \).
\end{proposition}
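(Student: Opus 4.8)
The plan is to prove both directions by substituting the three RoPE identities of \eqref{eq:rope-equation} directly into the RPE Equation, and then to exploit the elementary fact that a bilinear form agreeing on all query/key vectors forces the two underlying matrices to coincide. The whole argument is essentially a bookkeeping computation; the only point deserving care is the logical separation between being an RPE function and being a RoPE function.

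For the forward direction I would assume $f$ is a RoPE function. Then $f$ is in particular an RPE function, so the RPE Equation holds for some $g$, and simultaneously all three identities of the RoPE Equation hold for the \emph{same} $g$. Plugging $f(\bm q,\bm x)=\mathbf{R}_f(\bm x)\bm q$ and $\rho(\bm u,\bm v)=\bm u^\top\bm v$ into the right-hand side of the RPE Equation gives $\rho(f(\bm q,\bm x),f(\bm k,\bm y))=\bm q^\top \mathbf{R}_f(\bm x)^\top \mathbf{R}_f(\bm y)\,\bm k$, while the third RoPE identity rewrites the left-hand side as $g(\bm q,\bm k,\bm x-\bm y)=\bm q^\top \mathbf{R}_f(\bm y-\bm x)\,\bm k$. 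Equating the two produces $\bm q^\top \mathbf{R}_f(\bm x)^\top \mathbf{R}_f(\bm y)\,\bm k=\bm q^\top \mathbf{R}_f(\bm y-\bm x)\,\bm k$ for all $\bm q,\bm k\in\mathbb{R}^d$. I would then close this direction by specializing to the standard basis vectors $\bm q=\bm e_i$ and $\bm k=\bm e_j$, which reads off the $(i,j)$ entry of each matrix and hence gives the claimed identity $\mathbf{R}_f(\bm x)^\top \mathbf{R}_f(\bm y)=\mathbf{R}_f(\bm y-\bm x)$ entrywise. This passage from a scalar identity to a matrix identity is the one place that needs a word of justification, and it is routine.

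For the converse I would assume the matrix identity and construct the witnessing similarity-difference function by setting $g(\bm q,\bm k,\bm z):=\bm q^\top \mathbf{R}_f(-\bm z)\,\bm k$, which reproduces the third RoPE identity immediately, with $\rho$ and the form of $f$ fixed by hypothesis. The substantive observation here is that the matrix identity is exactly what guarantees the right-hand side $\bm q^\top \mathbf{R}_f(\bm x)^\top \mathbf{R}_f(\bm y)\,\bm k$ of the RPE Equation depends on $\bm x,\bm y$ only through their difference $\bm x-\bm y$; this is precisely the well-definedness condition for $g$ as a function of that difference, and it encodes the relative-position dependency that motivates RoPE. Verifying the RPE Equation then amounts to applying the identity once more, establishing that $f$ is an RPE function meeting all three RoPE conditions, i.e.\ a RoPE function. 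I do not anticipate a genuine obstacle; the main care is keeping straight that ``RoPE function'' presupposes ``RPE function,'' so the converse must explicitly exhibit $g$ and confirm the RPE Equation rather than merely assert the three RoPE identities.
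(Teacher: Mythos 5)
Your proposal is correct and follows essentially the same route as the paper, which obtains the proposition by directly substituting the three RoPE identities into the RPE Equation; you simply make explicit the two routine points the paper leaves implicit (passing from the scalar identity to the matrix identity via standard basis vectors, and exhibiting $g$ in the converse to confirm well-definedness). No gaps.
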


\tangyu{Note that the definition of RoPE Equation demonstrates that the position encoding should only be dependent on the relative position of the tokens, which thus be robust against the offset operations. 
We further expand the definition of the RoPE function to a parameterized one (i.e., Definition \ref{def:extend_rope}) via rotation matrices (i.e., Definition \ref{def:extend_rotation_matrix}) as follows\footnote{For clarification, the definitions of the rotation matrix and its exponential representation can be found in Appendix \switch{\ref{app:theorems}}{A}.}.}

\begin{definition}[Parameterized Rotation Matrix]\label{def:extend_rotation_matrix}
Let \( \mathbf{R}(\;\cdot\;; \mathcal A): \mathbb{R}^N \rightarrow \mathbb{R}^{d \times d} \) be a matrix-valued function parameterized by \( N \) skew-symmetric matrices \( \mathcal A = \{ \mathbf{A}_1, \mathbf{A}_2, \ldots, \mathbf{A}_N \} \). We say that \( \mathbf{R}(\;\cdot\;; \mathcal{A}) \) is a \textbf{rotation matrix function} parameterized by angle matrices $\mathcal A$ if it can be expressed as: 
\begin{equation}
    \label{eq:exp-solution}
    \mathbf{R}(\bm{x}; \mathcal{A}) = \exp\left(\sum_{i=1}^{N} \mathbf A_i x_i\right),
\end{equation}
where \( \bm{x} = (x_1, x_2, \ldots, x_N) \in \mathbb{R}^N \) and \( \exp(\;\cdot\;) \) denotes the matrix exponential.
\end{definition}

% \tangyu{Therefore, we have the following definition.}

\begin{definition}[Parameterized RoPE Function]\label{def:extend_rope}
Let 
$f: \mathbb{R}^{d}\times\mathbb{R}^N\rightarrow\mathbb{R}^d$
be a RoPE function.
$f$ is said to be \textbf{parameterized by angle matrices} if and only if
there exists a rotation matrix function $\mathbf{R}_f(\;\cdot\;; \mathcal{A})$ parameterized by angle matrices $\mathcal A$ such that the RoPE Equation (i.e., \cref{eq:rope-equation}) holds.

In this case, $\mathbf{R}_f(\;\cdot\;; \mathcal{A})$ is referred to as the \textbf{rotation matrix function of RoPE function $f(\;\cdot\;; \mathcal{A})$ parameterized by angle matrices $\mathcal A$}. 
\tangyu{For simplicity, we slightly abuse the notation and refer to $\mathbf R$ as the rotation matrix of RoPE function $f$. }
\end{definition}

\subsection{Main theorems}
% \update{To Check}

% \begin{definition}[Rotation Matrix with Angle Matrices]
% Let $\mathbf R(\bm x; \bm A): \mathbb{R}^N \rightarrow \mathbb{R}^{d\times d}$ be a rotation matrix parameterized by $N$ skew-symmetric matrices $\bm A = \{ A_1, A_2, \cdots, A_N \}$.
% $\mathbf R(\bm x; \bm A)$ is said to be a \textbf{rotation-matrix-valued function $\mathbf R$ with angle matrices $\bm A$} if and only if
% it can be represented by: 
% % For $N$-axial coordinates $\bm x = (x_1, x_2, \cdots, x_N)$, and , the rotation matrix $\mathbf R(\bm x; \bm A)$ is defined as follows:
% \begin{equation}
%     \label{eq:exp-solution}
%     \mathbf R(\bm x; \bm A) = \exp(\sum\limits_{i=1}^{N}A_ix_i),
% \end{equation}
% \end{definition}

\tangyu{Based on the formal definitions above, we present our key theoretical results in the following theorem.}

\begin{theorem}\label{theo:main}
\label{the:rotation-commute}
Let \( \mathbf{R}(\;\cdot\;; \mathcal{A}): \mathbb{R}^N \rightarrow \mathbb{R}^{d \times d} \; (N>1)\) be a rotation matrix function parameterized by angle matrices $\mathcal A$.
The rotation difference $\mathbf R(\bm x)^\top\mathbf R(\bm y)$ 
can be represented by 
the location difference $\bm y - \bm x$ 
if and only if  $\mathcal A$ pairwise commute.
\end{theorem}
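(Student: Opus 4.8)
The plan is to strip the statement down to a single clean identity about the matrix exponential on a linear span, and then treat the two implications separately. First I would unpack the phrase ``$\mathbf{R}(\bm{x})^\top\mathbf{R}(\bm{y})$ can be represented by $\bm{y}-\bm{x}$'': it asserts the existence of a function $\Phi$ with $\mathbf{R}(\bm{x})^\top\mathbf{R}(\bm{y}) = \Phi(\bm{y}-\bm{x})$ for all $\bm{x},\bm{y}$. Setting $\bm{x}=\bm{0}$ and using $\mathbf{R}(\bm{0})=\exp(\bm{0})=\mathbf{I}$ identifies $\Phi$ with $\mathbf{R}$, so representability is equivalent to the identity $\mathbf{R}(\bm{x})^\top\mathbf{R}(\bm{y})=\mathbf{R}(\bm{y}-\bm{x})$ of Proposition~\ref{prop:rope}. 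Since each $\mathbf{A}_i$ is skew-symmetric, $(\sum_i \mathbf{A}_i x_i)^\top = -\sum_i \mathbf{A}_i x_i$, hence $\mathbf{R}(\bm{x})^\top = \mathbf{R}(-\bm{x})$. Writing $\mathbf{P}=\sum_i \mathbf{A}_i x_i$ and $\mathbf{Q}=\sum_i \mathbf{A}_i y_i$, the identity becomes $\exp(-\mathbf{P})\exp(\mathbf{Q})=\exp(\mathbf{Q}-\mathbf{P})$; as $\mathbf{P},\mathbf{Q}$ range independently over the whole span $\mathcal{V}=\operatorname{span}\{\mathbf{A}_1,\dots,\mathbf{A}_N\}$ (which satisfies $-\mathcal{V}=\mathcal{V}$), the target reduces to: $\exp(\mathbf{U}+\mathbf{V})=\exp(\mathbf{U})\exp(\mathbf{V})$ for all $\mathbf{U},\mathbf{V}\in\mathcal{V}$.

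For the sufficiency direction, I would assume $\mathcal{A}$ pairwise commutes and observe that any two elements $\mathbf{U}=\sum_i u_i\mathbf{A}_i$ and $\mathbf{V}=\sum_j v_j\mathbf{A}_j$ of $\mathcal{V}$ then satisfy $\mathbf{U}\mathbf{V}=\sum_{i,j}u_i v_j\mathbf{A}_i\mathbf{A}_j=\sum_{i,j}u_i v_j\mathbf{A}_j\mathbf{A}_i=\mathbf{V}\mathbf{U}$. The standard fact that commuting matrices obey $\exp(\mathbf{U}+\mathbf{V})=\exp(\mathbf{U})\exp(\mathbf{V})$ then closes this direction at once.

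The necessity direction is where I expect the real work. Since $N>1$, I would fix a pair $i\neq j$ and specialize $\mathbf{U}=s\mathbf{A}_i$, $\mathbf{V}=t\mathbf{A}_j$ with $s,t\in\mathbb{R}$, so that the reduced identity reads $\exp(s\mathbf{A}_i+t\mathbf{A}_j)=\exp(s\mathbf{A}_i)\exp(t\mathbf{A}_j)$ as entire matrix-valued functions of $(s,t)$. Expanding both sides into their absolutely convergent power series and comparing the coefficient of the monomial $st$, the left-hand side contributes $\tfrac12(\mathbf{A}_i\mathbf{A}_j+\mathbf{A}_j\mathbf{A}_i)$ (from $\tfrac12(s\mathbf{A}_i+t\mathbf{A}_j)^2$) while the right-hand side contributes $\mathbf{A}_i\mathbf{A}_j$. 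Equating gives $\mathbf{A}_i\mathbf{A}_j=\mathbf{A}_j\mathbf{A}_i$, and since $i,j$ were arbitrary (with $i=j$ trivial), $\mathcal{A}$ pairwise commutes.

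The only delicate point, and the step I would justify most carefully, is the legitimacy of equating the $st$-coefficients, equivalently of evaluating the mixed partial $\partial^2/(\partial s\,\partial t)$ at the origin: this is routine because both sides are entire functions and the double series converge absolutely, so term-by-term comparison is valid. I would also remark that $N>1$ is precisely what makes the claim nontrivial---for $N=1$ a single matrix commutes with itself and the RoPE identity holds unconditionally---and that reducing to the two-parameter subfamily indexed by $(i,j)$ loses nothing, since commutativity of $\mathcal{A}$ is itself a pairwise condition.
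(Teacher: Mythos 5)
Your proposal is correct and follows essentially the same route as the paper's proof: identifying the representing function $\Phi$ with $\mathbf{R}$ by setting $\bm{x}=\bm{0}$ is the content of Lemma~\ref{the:f_exists}, the reduction via skew-symmetry to $e^{\mathbf U}e^{\mathbf V}=e^{\mathbf U+\mathbf V}$ mirrors the paper's use of $\mathbf R(\bm x)^\top=\mathbf R(-\bm x)$, and extracting $\mathbf A_i\mathbf A_j-\mathbf A_j\mathbf A_i$ from the second-order term of the exponential series is exactly the mechanism of Lemma~\ref{the:two_commute} (you compare the $st$-coefficient where the paper takes a limit along the diagonal $x=y=t$, which is the same computation). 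The only structural difference is that you bypass the induction of Lemma~\ref{the:multi_commute} by arguing directly on the span and on pairs of generators, a harmless streamlining.
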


The proof of Theorem \ref{the:rotation-commute} is shown in Appendix \switch{\ref{app:theorems}}{A}.
% That is, we find the necessary and sufficient condition that a function $f$ is a RoPE function parameterized by angle matrices is that 

Theorem \ref{theo:main} together with Proposition \ref{prop:rope} demonstrate that a function $f$ is a RoPE function parameterized by angle matrices if and only if the angle matrices $\mathcal{A}$ pairwise commute. 
In other words, to construct a relative positional encoding method for an attention mechanism that is robust to offset, it suffices to establish an angle matrix set that pairwise commutes.
Thus, in the following section, we focus on the construction of $\mathcal{A}$ to satisfy this particular requirement.
We call our method {\method} to indicate the \textbf{com}mutativity of angle matrices in \textbf{RoPE} function. The overview of {\method} is shown in Figure \ref{fig:overview}.

% \todo{check}
% \begin{corollary}
% Let $\mathbf R(\;\cdot\;; \bm{A})$ be a rotation matrix function. 
% $\mathbf R$ is a rotation matrix of a RoPE function if and only if $\bm{A}$ pairwise commute.
% \end{corollary}

\begin{remark}
\tangyu{Among all the previous RoPE methods, LieRE \cite{ostmeier2024liere} is the most related one to ours which solves }
%LieRE proposes a trainable paradigm in 
Equation $\ref{eq:exp-solution}$ by directly training the skew-symmetric matrix set $\mathcal A$.
However, it is worth noting that in the following equation of their implementation
\begin{equation}
\resizebox{.42\textwidth}{!}{$
(\mathbf R(\bm x_q; \mathcal A) \bm q)^\top(\mathbf R(\bm x_k; \mathcal A) \bm k)
=  \bm q^\top  \mathbf R(\bm x_q; \mathcal A)^\top  \mathbf R(\bm x_k; \mathcal A) \bm k
$},
\end{equation}
sometimes $\mathbf R(\bm x_q; \mathcal A)^\top  \mathbf R(\bm x_k; \mathcal A) \ne \mathbf R(\bm x_q - \bm x_k; \mathcal A) $ because the probability that two random matrices commute is small, where $\bm{x}_q$ and $\bm{x}_k$ are two arbitrary coordinates.
Thus, $\mathbf R$ proposed by LieRE does not consistently satisfy the requirements of the RoPE Equation.
% That is,  doesn't satisfy the 
\end{remark}

\subsection{Construction of pairwise commuting matrices}

In this section, we elaborate on concrete ways to construct the pairwise commuting matrices to solve the RoPE Equation parameterized by angle matrices.
Note that if two matrices are both block diagonal with the same block sizes, where the corresponding blocks are commutative, then these two matrices are commutative.
Formally speaking, for two matrices $\mathbf M, \mathbf N \in \mathbb R^{d\times d}$, they commute if
\begin{equation}
  \left\{
  \begin{array}{l}
    \vspace{1ex}
    \mathbf M = \mathrm{diag}(\mathbf M_1,\mathbf  M_2, \cdots,\mathbf  M_b) \\
    \vspace{5px}
    \mathbf N = \mathrm{diag}(\mathbf N_1, \mathbf N_2, \cdots,\mathbf  N_b) \\
    \vspace{5px}
    \mathbf M_i,\;\mathbf  N_i \in \mathbb R^{b \times b} \quad \quad \forall i \in \{1, 2, \cdots, m\} \\
    \mathbf M_i \mathbf N_i = \mathbf N_i \mathbf M_i \quad \quad \forall i \in \{1, 2, \cdots, m\} \\
    \end{array}
    \right.
\end{equation}
where $b$ denotes block size that is a factor of $d$ and $m = \frac{d}{b}$.

Thus, to present solutions to RoPE Equation parameterized by angle matrices, it suffices to partition the angle matrices $\mathbf A_i$ in \cref{eq:exp-solution} into $m$ blocks ${\mathbf B_{i1}, \mathbf B_{i2}, \cdots, \mathbf B_{im}}$ where:
\begin{equation}
\label{eq:definition-of-matrix-b}
\left\{
\begin{array}{l}
    \vspace{1ex}
    \mathbf A_i = \mathrm{diag}(\mathbf B_{i1}, \mathbf B_{i2}, \cdots, \mathbf B_{im}) \\
    \mathbf B_{ij} \in \mathbb R^{b \times b} \quad\quad \forall j \in \{1, 2, \cdots, m\} \\
\end{array}
\right.
\end{equation}

Defining $\mathcal B_j = \{\mathbf B_{1j}, \mathbf B_{2j}, \cdots, \mathbf B_{Nj}\}$, if $\mathcal B_j$ pairwise commutes for all $j \in \{1, 2, \cdots, m\}$, then $\mathcal A$ pairwise commutes.
Note that $\mathcal  B_1,\mathcal B_2, \cdots,\mathcal B_m$ are equivalent. Thus, without causing confusion, we will uniformly use $\mathcal B = \{\mathbf B_1, \mathbf B_2, \cdots,\mathbf B_N\}$ to represent $\mathcal B_j = \{ \mathbf B_{1j}, \mathbf B_{2j}, \cdots,\mathbf B_{Nj}\}$ in the following.

% Currently, no general method provides a necessary and sufficient condition for constraining arbitrary trainable skew-symmetric matrices to commute.
% Therefore, we aim to identify a sufficient condition for this constraint.
Currently, there is no general method that provides the necessary and sufficient conditions for ensuring that arbitrary trainable skew-symmetric matrices commute.
Therefore, we aim to establish sufficient conditions for enforcing this constraint. 
More concretely, we present Proposition \ref{prop:ap} and Proposition \ref{prop:ld} to construct two different variants of matrices, respectively.

\begin{proposition}
\label{prop:ap}
A set of matrices $\mathcal B = \{\mathbf B_1, \mathbf B_2, \cdots,\mathbf B_N\}$ pairwise commutes if all but one of them are zero matrices, i.e., $\exists\; k\in\{1,2,\cdots,N\}\;$ s.t.
\begin{equation}
  \mathbf   B_i = \mathbf O \; (\forall i \ne k)
\end{equation}
\end{proposition}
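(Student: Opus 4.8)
The plan is to verify the defining condition of pairwise commutativity directly, namely that $\mathbf{B}_i \mathbf{B}_j = \mathbf{B}_j \mathbf{B}_i$ holds for every pair of indices $i, j \in \{1, 2, \ldots, N\}$. The single fact that drives everything is that the zero matrix is absorbing under multiplication: for any $\mathbf{M} \in \mathbb{R}^{b \times b}$ one has $\mathbf{O}\,\mathbf{M} = \mathbf{M}\,\mathbf{O} = \mathbf{O}$, so in particular $\mathbf{O}$ commutes with every matrix. By hypothesis there is a distinguished index $k$ with $\mathbf{B}_i = \mathbf{O}$ for all $i \neq k$, so every element of $\mathcal{B}$ except possibly $\mathbf{B}_k$ is a zero matrix, and the claim should follow by examining how an arbitrary pair $(i,j)$ relates to $k$.

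I would then carry out a short case split. If neither $i$ nor $j$ equals $k$, then $\mathbf{B}_i = \mathbf{B}_j = \mathbf{O}$ and both products vanish, so commutativity is immediate. If exactly one of them equals $k$, say $i = k$ and $j \neq k$, then $\mathbf{B}_j = \mathbf{O}$ and the absorbing property gives $\mathbf{B}_i \mathbf{B}_j = \mathbf{B}_k \mathbf{O} = \mathbf{O} = \mathbf{O}\,\mathbf{B}_k = \mathbf{B}_j \mathbf{B}_i$; the symmetric sub-case $j = k$, $i \neq k$ is identical. Finally, if $i = j = k$, commutativity is the trivial self-commutation $\mathbf{B}_k \mathbf{B}_k = \mathbf{B}_k \mathbf{B}_k$. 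Since these cases exhaust all pairs, $\mathcal{B}$ pairwise commutes.

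There is essentially no genuine obstacle here: the statement is immediate once one notes that a single nonzero matrix has nothing to fail to commute with. The only thing worth being careful about is completeness of the case analysis, in particular not overlooking the diagonal case $i = j = k$ and making sure the argument does not secretly assume $N \geq 2$; the conclusion holds even when $N = 1$, where the condition is vacuous and self-commutation suffices. Accordingly I would present the proof as a brief enumeration of the three cases rather than any computation.
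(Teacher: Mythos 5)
Your proof is correct: the paper states this proposition without proof (treating it as immediate), and your direct case analysis based on the zero matrix commuting with everything is exactly the argument that is implicitly intended. Nothing is missing; the enumeration of cases, including the diagonal one, is complete and sound.
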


Based on Proposition \ref{prop:ap}, we propose Trainable RoPE parameterized by \textbf{Axial-Partition Angle Matrices (\method-AP)}. 
We divide the $m$ blocks into $N$ parts, such that each part is responsible for the rotation of a specific axis. 
In this case, $m$ should be a multiple of $N$.
Specifically, 
\begin{equation}
\label{eq:rope-ap}
\mathbf B_{ij} =
  \left\{
  \begin{array}{ll}
    \mathbf P_j - \mathbf P_j^\top, & \quad \text{if} \quad j\equiv i \pmod N \\
    \mathbf O, & \quad \text{otherwise}  \\
    \end{array}
    \right.
\end{equation}
where $\{\mathbf P_j\}_{j=1}^{m}$ represents a set of trainable matrices, $\mathbf O$ denotes a zero matrix, and $\equiv$ indicates congruence modulo.

\begin{proposition}
\label{prop:ld}
A set of matrices $\mathcal B = \{\mathbf B_1, \mathbf B_2, \cdots,\mathbf B_N\}$ pairwise commutes if they pairwise linearly dependent, i.e., $\exists\; \lambda_1,\lambda_2,\cdots,\lambda_N\in\mathbb R\;$ s.t.
\begin{equation}
    \lambda_1\mathbf  B_1 = \lambda_2 \mathbf B_2 = \cdots = \lambda_N\mathbf  B_N
\end{equation}
\end{proposition}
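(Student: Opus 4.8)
The plan is to reduce the entire statement to the elementary fact that any two scalar multiples of a single fixed matrix commute. First I would read the hypothesis as asserting the existence of one common matrix $\mathbf{C} := \lambda_1 \mathbf{B}_1 = \lambda_2 \mathbf{B}_2 = \cdots = \lambda_N \mathbf{B}_N$; the substance of ``pairwise linearly dependent'' is precisely that every $\mathbf{B}_i$ lies on the line spanned by this single $\mathbf{C}$. Extracting $\mathbf{C}$ is the only conceptual move; everything else is a one-line computation.

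In the principal regime where all scalars are nonzero, I would write $\mathbf{B}_i = \lambda_i^{-1}\mathbf{C}$ for each $i$ and compute directly, for any pair of indices $i,j$,
\[
\mathbf{B}_i\mathbf{B}_j = \frac{1}{\lambda_i\lambda_j}\,\mathbf{C}^2 = \frac{1}{\lambda_j\lambda_i}\,\mathbf{C}^2 = \mathbf{B}_j\mathbf{B}_i,
\]
since the scalar coefficients commute and $\mathbf{C}^2$ is a single fixed matrix independent of the ordering. This already closes the argument for the construction that the proposition is intended to support, in which each $\mathbf{B}_i$ is taken to be a scalar multiple $\lambda_i \mathbf{B}$ of one trainable skew-symmetric matrix $\mathbf{B}$, whence $\mathbf{C}$ is simply a rescaling of $\mathbf{B}$.

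The only place needing care is the degenerate situation in which some $\lambda_k = 0$. Then the common value forces $\mathbf{C} = \lambda_k \mathbf{B}_k = \mathbf{O}$, so $\lambda_i \mathbf{B}_i = \mathbf{O}$ for every $i$, and in particular $\mathbf{B}_i = \mathbf{O}$ whenever $\lambda_i \neq 0$; such zero matrices are central and commute with everything. I would flag explicitly that to obtain a genuine commuting family one should take the intended reading in which $\mathbf{C}$ is a fixed (possibly nonzero) matrix and each $\mathbf{B}_i$ is a true scalar multiple of it, since the bare chain of equalities is vacuously satisfied by all-zero scalars. I do not anticipate any real obstacle here: the computation is immediate once $\mathbf{C}$ is isolated, and the degenerate scalar case is dispatched in a single sentence by noting that zero matrices commute with all matrices.
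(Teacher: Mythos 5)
Your argument is correct, and in fact the paper never writes out a proof of Proposition~\ref{prop:ld} at all --- it is asserted as an elementary fact and used directly to define \method-LD --- so there is no ``paper proof'' to diverge from; yours is the natural one-line computation the authors evidently have in mind. Isolating the common matrix $\mathbf{C}=\lambda_i\mathbf{B}_i$ and computing $\mathbf{B}_i\mathbf{B}_j=\tfrac{1}{\lambda_i\lambda_j}\mathbf{C}^2=\mathbf{B}_j\mathbf{B}_i$ when all $\lambda_i\neq 0$ is exactly what the construction requires: in \method-LD the matrices are instantiated as $\mathbf{B}_i=\theta_i(\mathbf{P}-\mathbf{P}^\top)$, i.e.\ each is an explicit scalar multiple of one fixed matrix, so your ``intended reading'' is the one the paper actually uses and your computation covers it. Your observation about the degenerate scalars is a genuine (if minor) defect of the \emph{statement} rather than of your proof: if two or more of the $\lambda_i$ vanish, the chain $\lambda_1\mathbf{B}_1=\cdots=\lambda_N\mathbf{B}_N=\mathbf{O}$ holds while the corresponding $\mathbf{B}_i$ remain arbitrary and need not commute, so the proposition as literally written is false in that corner case. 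Be careful only with your closing sentence: that case is \emph{not} ``dispatched by noting that zero matrices commute with all matrices,'' since the offending $\mathbf{B}_i$ are not zero --- the correct resolution is the one you already give, namely restating the hypothesis as ``each $\mathbf{B}_i$ is a scalar multiple of a common matrix $\mathbf{C}$'' (equivalently, requiring the $\lambda_i$ to be nonzero), after which the single-zero case is handled by centrality of $\mathbf{O}$ and the rest by your main computation.
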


Based on Proposition \ref{prop:ld}, we propose Trainable RoPE parameterized by \textbf{Linearly-Dependent Angle Matrices (\method-LD)}. 
Specifically, 
We train a base matrix $\mathbf P$ with scaling factors $\{\theta_i\}_{i=1}^{N}$.
Then we obtain:
\begin{equation}
    \mathcal B = \{ \mathbf B_i = \theta_i (\mathbf P -\mathbf  P^\top)\; |\; i = 1, 2, \cdots, N \}
\end{equation}

\begin{figure}[t]
    \centering
    \vspace{-2em}
    \includegraphics[width=\linewidth]{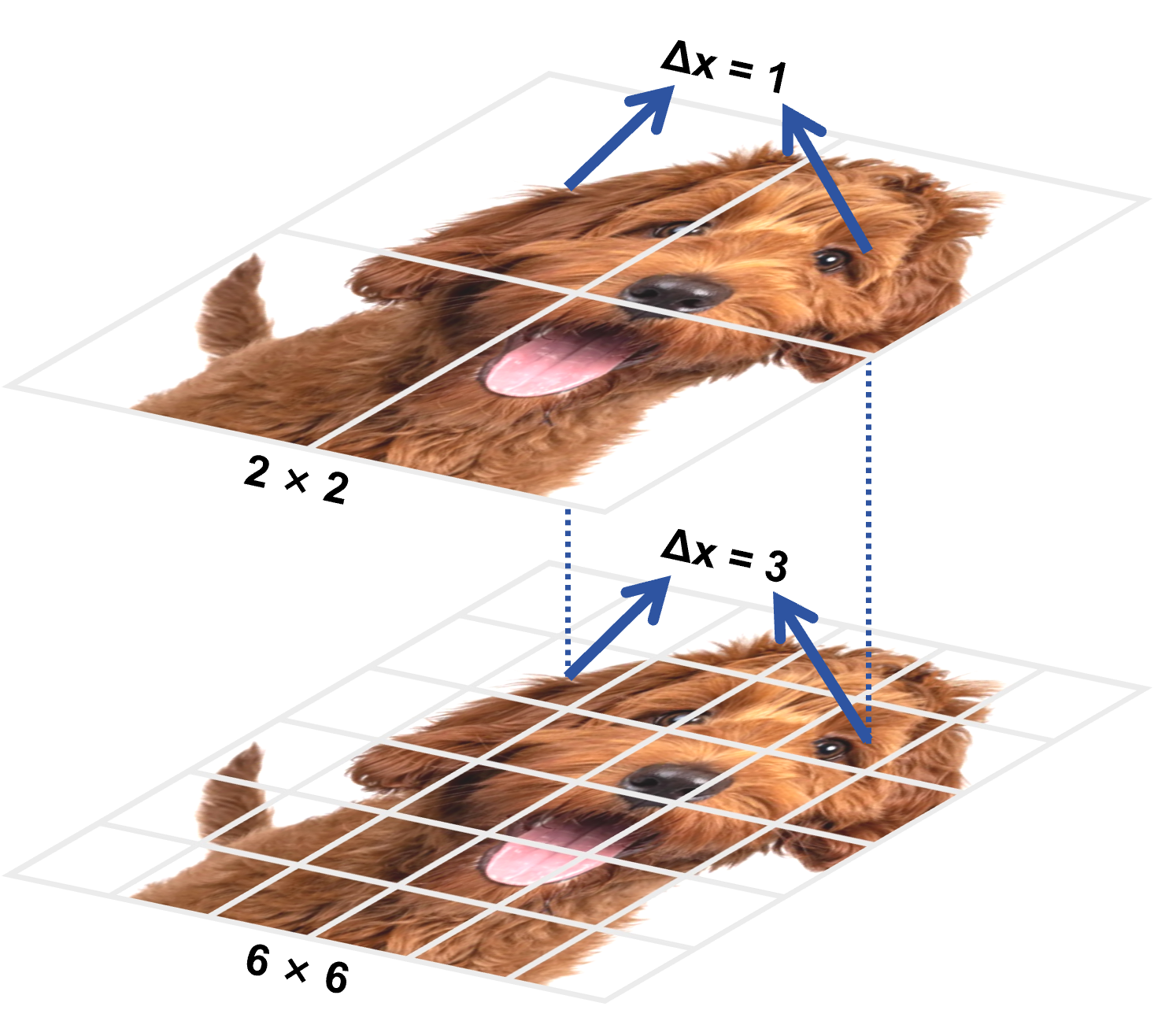}
    \vspace{-1em}
    \caption{Different patch sizes result in different relative relationships.}
    \label{fig:patch-span}
\end{figure}

\subsection{Implementation details of coordinates and improvements}

\subsubsection{Relative scaling and center offset} 

\noindent\textbf{Relative scaling}. In language models, positions are typically treated as discrete. Additionally, due to the relationship between tokens and the inherent uncertainty in sequence length, there is no need to scale these positions into a specified range. However, in the case of images, positions are continuous. When using different patch sizes, it becomes unreasonable to define their relative relationships solely based on the span of patches. As shown in Figure \ref{fig:patch-span}, the same locations with varying patch sizes can result in significantly different relative relationships.

As a result, applying relative coordinates is a better method for measuring relative relationships in images. For an image with shape $H\times W$, we scale both the height $H$ and the width $W$ to $1$. Therefore, for a pixel located at $(h, w)$ in the raw image, its coordinate is treated as $(\frac{h}{H}, \frac{w}{W})$. For high-dimensional coordinates, we perform the same operation, i.e., transform a raw coordinate $(x_1, x_2, \cdots, x_N)$ in multi-axial canvas with shape $(X_1, X_2, \cdots, X_N)$ into $(\frac{x_1}{X_1}, \frac{x_2}{X_2}, \cdots, \frac{x_N}{X_N})$.

% \subsubsection{Center Offset}
\vspace{.7em}
\noindent\textbf{Center offset}. When we project a patch to a feature tensor with a coordinate, we aggregate all information of the patch into a specified location. 
Simply, we adopt the center point of a patch as the aggregation location.

\subsubsection{Position perturbation}

To achieve better robustness and excellent performance across different scales during inference, we add perturbations to the coordinates of the patches.
Specifically, for a patch with center $(x_1, \cdots, x_N)$ and size $(\Delta X_1, \cdots, \Delta X_N)$, during training, we formulate its location as:
\begin{equation}
    \mathcal N
    \left(
        \left(
                x_1, \cdots, x_N
        \right)^\top;
        \text{diag}(\sigma \Delta X_1 , \cdots, \sigma \Delta X_N )^2
    \right)
\end{equation}
where $\sigma$ is a hyper-parameter called perturbation intensity. Additionally, we truncate the location within the patch area, i.e., $ \left[ x_k - \frac{\Delta X_k}{2}, x_k + \frac{\Delta X_k}{2} \right] $.

% \todo{motivation in Chinese, longer}
% 动机：在 APE 中，如果在某个固定分辨率下进行训练，想要外推到其他分辨率几乎是不可能的，因为位置编码和 patch 是一一对应的。如果在 APE 中想要外推到其他分辨率，基本上只有缩放并插值这一种方案（就是让所有位置的坐标都小于等于原来训练的宽和高，然后通过 bilinear 等方法插值获得一个位置编码）。
% 然而实验证明（后文的 ablation study 中 perturbation intensity = 0 的 APE 实验），直接插值时，分辨率外推的效果是很差的，随着分辨率增大，准确率下降非常显著。
% 于是我们考虑在训练的时候就加入位置扰动（添加一个均值为 0，标准差为 perturbation intensity 的噪声），并且在训练的时候就通过插值获得位置编码（因为添加位置扰动后，位置就不再是离散的，而是连续的了）。实验证明（后文的 ablation study 中 perturbation intensity = 1 的 APE 实验），这个方法对 APE 的分辨率外推效果是极好的。
% 同样的思想，我们也想在 RoPE 中如此操作（因为 RoPE 对位置的编码本身就是连续的，因此也用不着插值等方法，直接对位置添加高斯噪声即可）。
% 实验证明，添加位置扰动对 RoPE 也是有效果的，但是效果没有那么明显，这也许是因为 RoPE 本身就已经具有对位置扰动的鲁棒性。

\begin{table*}[t]
\centering
\begin{tabular}{c|c|ccccccccc}
\toprule
\multirow{2}{*}{\textbf{\makecell[c]{Position Encoding\\Method}}} & \multirow{2}{*}{\textbf{\makecell[c]{Perturbation\\Intensity}}} & \multicolumn{9}{c}{\textbf{Evaluation Resolution}}             \\

                                          &                                                  & \textbf{112}  & \textbf{128}   & \textbf{192}   & \textbf{224}   & \textbf{256}   & \textbf{320}   & \textbf{384}   & \textbf{448}   & \textbf{512}   \\ \midrule
\multirow{2}{*}{\textbf{APE}}             & 1                                                & 30.04         & 38.69          & 56.4           & 58.76          & 60.02          & 59.27          & 57.04          & 54.10           & 50.99          \\
                                          & 0                                                & 19.71         & 33.43          & 55.97          & 58.62          & 55.31          & 52.63          & 49.68          & 46.39          & 42.66          \\ \midrule
\multirow{2}{*}{\textbf{Vanilla RoPE}}    & 1                                                & 36.94         & 45.48          & 60.72          & 63.09          & 63.24          & 62.12          & 59.24          & 55.51          & 51.11          \\
                                          & 0                                                & 36.41         & 44.48          & 59.97          & 62.03          & 62.54          & 61.36          & 58.56          & 54.79          & 50.58          \\ \midrule
\multirow{2}{*}{\textbf{LieRE}} & 1                                                & 38.03         & {\ul 46.97}    & 62.22          & 64.36          & 64.99          & 63.78          & 61.15          & 57.92          & 53.74          \\
                                          & 0                                                & {\ul 38.22}   & 46.85          & 62.01          & 64.07          & 64.54          & 63.46          & 60.74          & 56.89          & 52.51          \\ \midrule
\multirow{2}{*}{\textbf{\method-AP (ours)}} & 1                                                & 35.75         & 46.18          & 62.82          & {\ul 65.32}    & {\ul 65.83}    & {\ul 64.78}    & {\ul 61.88}    & {\ul 58.21}    & {\ul 54.02}    \\
                                          & 0                                                & 37.17         & 46.95          & 62.63          & 64.76          & 65.11          & 64.35          & 61.62          & 58.10           & 53.81          \\ \midrule
\multirow{2}{*}{\textbf{\method-LD (ours)}} & 1                                                & \textbf{38.30} & \textbf{47.28} & \textbf{63.53} & \textbf{65.49} & \textbf{65.95} & \textbf{65.27} & \textbf{62.62} & \textbf{59.11} & \textbf{55.29} \\
                                          & 0                                                & 36.88         & 46.54          & {\ul 62.89}    & 65.27          & 65.66          & 64.27          & 61.83          & 57.87          & 53.64  \\ \bottomrule
\end{tabular}
\vspace{-.3em}
\caption{Accuracy of 2D classification on ImageNet. Models are trained at a resolution of $224\times 224$ and evaluated at varying resolutions.}
\label{tab:main-2d-cls}
\vspace{-.5em}
\end{table*}

\section{Experiments}

In this section, we evaluate the performance of various positional encoding methods on classic vision tasks.
We first assess their scalability in 2D image classification across different resolutions.
Additionally, we conduct object detection experiments to demonstrate the generalizability of our approach. 
To further examine the ability to handle higher-dimensional data, we perform 3D classification experiments, which are detailed in Appendix \switch{\ref{app:more-exp}}{B}.

\subsection{2D classification}

\subsubsection{Setup}

\noindent\textbf{Baselines and model architecture.}
We evaluate our proposed methods (\method-LD and \method-AP) against APE, vanilla RoPE (as introduced by RoFormer), and LieRE. To isolate the effects of positional encoding, we utilize a standard Vision Transformer (ViT-B/16) architecture with minimal modifications. The APE codebook is removed for methods that do not employ APE, and self-attention layers are replaced with RoPE self-attention parameterized by angle matrices. This design highlights the performance differences among various positional encoding methods. A block size of 8 is used in practice. More details are provided in Appendix \switch{\ref{app:reformulation}}{D}. 

\noindent\textbf{Training and evaluation protocol.}
All models are trained at a standard resolution of $224\times 224$ and evaluated across multiple resolutions to test their robustness and scalability on the ImageNet-1K dataset \citep{Deng2009ImageNetAL}.
The models are trained from scratch using randomly initialized parameters, ensuring no influence from pre-trained weights or external priors. To maintain fairness and reproducibility, we apply only basic data augmentation techniques, such as resizing and random cropping, focusing on relative performance comparisons rather than achieving absolute accuracy.
The primary evaluation metric is accuracy on the test set across various resolutions. Since APE is inherently fixed and discrete, bilinear interpolation is applied to adapt it to different resolutions during evaluation. 
Optimization methods and hyper-parameters are detailed in Appendix \switch{\ref{app:details}}{C}.

\begin{figure}[h]
    \centering
    \begin{subfigure}{1\linewidth}
        \centering
        \includegraphics[width=\linewidth]{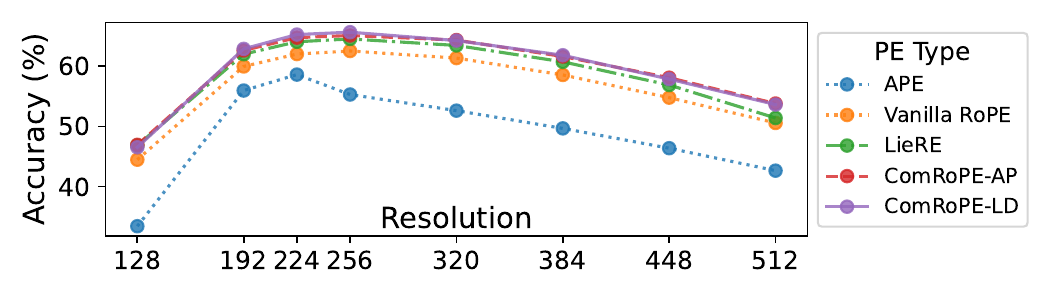}
        \vspace{-1.7em}
        \caption{Perturbation intensity = 0}
        \label{fig:2d-cls-pp0}
        \vspace{.3em}
    \end{subfigure}
    \hfill
    \begin{subfigure}{1\linewidth}
        \centering
        \includegraphics[width=\linewidth]{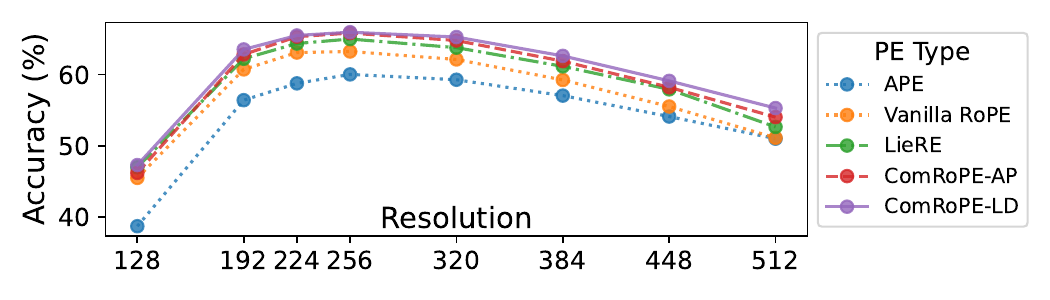}
        \vspace{-1.7em}
        \caption{Perturbation intensity = 1}
        \label{fig:2d-cls-pp1}
        \vspace{.3em}
    \end{subfigure}
    \vspace{-2em}
    \caption{Accuracy on ImageNet-1K for various positional encoding methods. The results for the same perturbation intensity are presented together for better comparison. For better visualization, evaluation resolution at $112\times 112$ is not included in these figures.}
    \label{fig:2d-cls-imagenet}
\end{figure}

\subsubsection{Main results}

The accuracy metrics for each positional encoding method across various resolutions are summarized in Table \ref{tab:main-2d-cls}. Additionally, Figure \ref{fig:2d-cls-imagenet} visually compares performance under different levels of perturbation intensity separately.

\noindent\textbf{Overall performance.}  
Across all evaluations, APE consistently exhibits the lowest accuracy, corroborating previous findings regarding its limitations in dynamic contexts. The vanilla RoPE shows a modest improvement over APE but remains less effective. In contrast, the trainable angle matrices, namely LieRE, \method-AP, and \method-LD, demonstrate significantly higher accuracy across all resolutions. Notably, \method-LD achieves the best performance among the three, suggesting that its inherent linear dependencies may enhance flexibility and structural learning capabilities.

\noindent\textbf{Accuracy at training resolution.} 
At the training resolution of $224\times 224$, all three methods with trainable angle matrices (\method-LD, \method-AP, and LieRE) achieve comparable accuracy, significantly outperforming both APE and the standard RoPE, which underscores the effectiveness of RoPE parameterized by trainable angle matrices. Notably, \method-LD surpasses the current state-of-the-art LieRE by $1.6\%$\footnote{Relative improvement is calculated as $\text{target} = \text{baseline} \times (1 + \text{improvement})$, where the improvement denotes the percentage increase over the baseline performance. This formula applies throughout the following sections.}.

\noindent\textbf{Scaling to higher resolution.} 
At resolutions beyond the training size, LieRE shows the steepest decline in accuracy among the three trainable RoPE variants, indicating greater sensitivity to resolution changes. In contrast, \method-LD and \method-AP exhibit a more gradual decrease in performance, thanks to their commutative properties that enhance positional robustness. Specifically, \method-LD outperforms LieRE by $2.9\%$ at a resolution of $512\times 512$.

\noindent\textbf{Significance of commutativity.}
These findings illustrate the effectiveness of trainable commutative angle matrices, particularly \method-LD, in maintaining accuracy and scalability across diverse resolutions. The results underscore the importance of commutativity in ensuring robust RoPE parameterized by trainable angle matrices for vision tasks. Furthermore, to better understand the role of commutativity, we conduct additional experiments by introducing coordinate offsets in LieRE (see Section \ref{sec:commutativity} for details).

\subsection{Object detection}

To demonstrate the generalizability of our approach, we conduct object detection experiments using the framework from \cite{xia2024vitcomer}. We adopt ViT-S as our backbone and apply {\method} to the attention layers. To ensure consistency with the pre-trained model, we initialize the angle matrix to zero.

We evaluate \method-LD, LieRE, and APE on the MS COCO dataset \citep{lin2014microsoft}. As summarized in Table \ref{tab:object-detection}, both \method-LD and LieRE outperform APE, with \method-LD achieving slightly better performance than LieRE while only using nearly half the number of extra parameters.

\begin{table}[h]
\centering
\resizebox{\linewidth}{!}{
    \begin{tabular}{ccccccc}
    \toprule
    \textbf{PE Method}  & \textbf{AP}   & \textbf{AP$^{50}$} & \textbf{AP$^{75}$} & \textbf{AP$^\text{s}$}  & \textbf{AP$^\text{m}$}  & \textbf{AP$^\text{l}$}  \\ \midrule
    \textbf{APE}        & 44.0          & 66.6          & 47.7          & 28.2          & 46.8          & 58.4          \\
    \textbf{LieRE}      & 44.5          & 67.3          & 48.4          & 29.0          & 46.9          & 58.7          \\
    \textbf{ComRoPE-LD} & \textbf{44.7} & \textbf{67.6} & \textbf{48.5} & \textbf{29.2} & \textbf{47.1} & \textbf{60.0} \\ \bottomrule
    \end{tabular}
}
\vspace{-.7em}
\caption{Results of object detection on MS COCO.}
\label{tab:object-detection}
\end{table}

To compare training efficiency, we plot the results for each epoch in Figure \ref{fig:detection-epochs}. Our findings indicate that both \method-LD and LieRE converge faster than APE, requiring $29\%$ fewer iterations to achieve the same results as APE.

\begin{figure}[h]
    \centering
    \vspace{-1em}
    \includegraphics[width=1\linewidth]{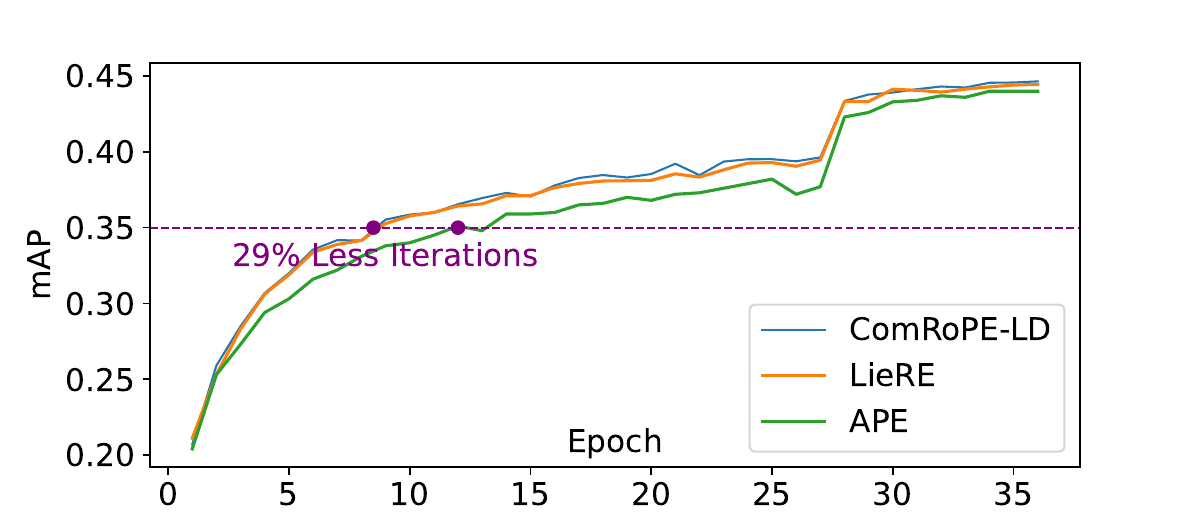}
    \vspace{-2em}
    \caption{Results over the whole training procedure.}
    \label{fig:detection-epochs}
    \vspace{-1em}
\end{figure}

% To demonstrate the generalizability of our method, we conduct experiments on object detection. 
% Following \cite{xia2024vitcomer}, we apply \method on attention layers, with initially zero angle matrix in order to keep the same behavior as the pretrained model.
% We evaluate \method-LD, LieRE, and APE, the final results are presented in Table \todo{}.
% We find that \method-LD and LieRE both perform better than APE, where \method-LD slightly beats LieRE though the former has less parameters than the later.

% To compare training efficiency, we plot the results of each epoch in Figure \todo{}.
% We find that both \method-LD and LieRE have a faster convergence speed than APE, such that costing $20\%$ less iterations to achieve the same result as APE.

% \subsection{Semantic Segmentation}

\subsection{Ablation study}

\subsubsection{Impact of commutativity}
\label{sec:commutativity}
To evaluate the significance of the commutativity of angle matrices, we conduct experiments on the LieRE by introducing a coordinate offset. Specifically, before inference, a random offset is applied uniformly across all coordinates within the image. The offset is sampled from a Gaussian distribution as follows:
\begin{equation}
    \mathcal{N} \left( 0; \; \rho^2 \cdot \mathbf{I}_{N \times N} \right)
\end{equation}
where $\rho$ represents the standard deviation of the random offset, and $\mathbf{I}_{N \times N}$ is the identity matrix of size $N \times N$. It is important to note that applying the same offset to all coordinates does not influence the relative positional dependencies between the patches. The other experimental settings remain consistent with those in the 2D classification tasks.

The results shown in Figure \ref{fig:abl-offset} demonstrate that the baseline model's performance deteriorates significantly as the standard deviation of the offset increases. In contrast, our proposed model, {\method}, maintains consistent performance across all offset values. This is due to the commutativity of the angle matrices, which allows the model to remain invariant to such coordinate shifts. The robustness of {\method} to this type of perturbation highlights its capacity to preserve relative positional information, even in the presence of modification introduced by coordinate offsets.

\begin{figure}[h]
    \centering
    \includegraphics[width=1\linewidth]{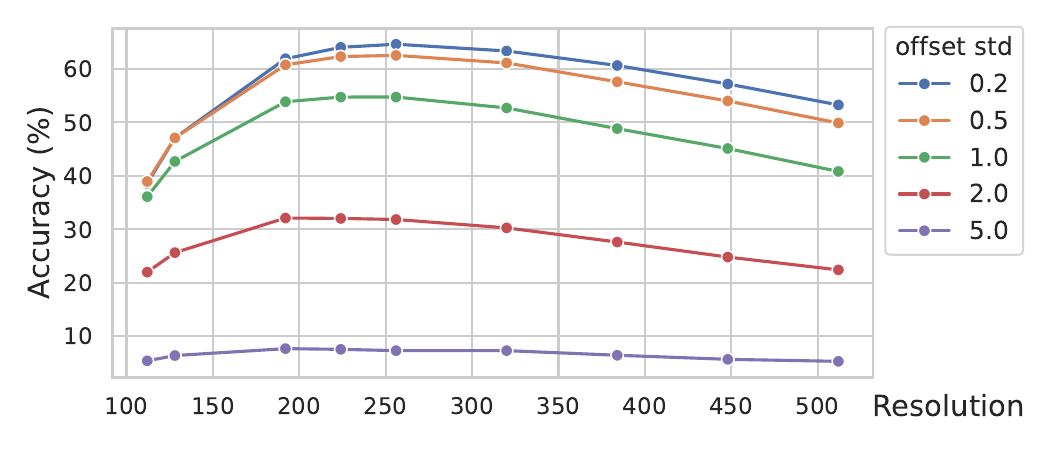}
    \vspace{-2em}
    \caption{Effect of coordinate offset on LieRE. As the standard deviation of the offset increases, the performance of the baseline model deteriorates, while {\method} remains unaffected (unpainted).}
    \label{fig:abl-offset}
    \vspace{-.5em}
\end{figure}

% \subsubsection{Impact of Commutativity}

% To illustrate the importance of commutativity of angle matrices, we conduct experiments with varying coordinate offset on LieRE. Formally, before inference, we add the same random offset to all coordinates in the patches, the offset satisfies a gaussion distribution 
% $
%     \mathcal N
%     \left(
%         0\;;\;
%         \rho^2 \cdot \mathbf I_{N \times N}
%     \right)
% $
% where $\rho$ is the standard deviation of the random offset.
% We keep the other settings the same as experiments in 2D classification.

% The results shown in Figure \ref{fig:abl-offset} show that the performance drops violently when the offset standard deviation increases. On the contrary, {\method} keeps the totally same performance with any offset because of the commutativity of angle matrices. Thus, xxxxxx.

% \begin{figure}[th]
%     \centering
%     \includegraphics[width=1\linewidth]{fig/offset-std.pdf}
%     \caption{xxxxxx}
%     \label{fig:abl-offset}
% \end{figure}

\subsubsection{Impact of block size}

% In this part, we explore the impact of block size. We conduct experiments of 2D classification on ImageNet, keeping the same setting as our primary experiments except block size. we vary block size from 2 to 8, and the results are shown in Figure \ref{fig:ablation-block-size}. Larger block size always results better performance, by introducing extra parameters and computation time. Note that when block size is small, the extra cost is little to consider. However, when the block size goes larger, the extra time complexity will take $O(Lndb^2)$ as the major term. So we limit the block size less than $8$ to leverage performance and extra cost. In another word, we choose block size as $8$ to gain better performance with little extra cost.

In this section, we examine the impact of block size on 2D classification using the ImageNet dataset. We maintain the same experimental setup as in our primary experiments, varying the block size from 2 to 8. The results are presented in Figure \ref{fig:ablation-block-size}. 
\begin{figure}[h]
    \centering
    \begin{subfigure}{1\linewidth}
        \centering
        \includegraphics[width=\linewidth]{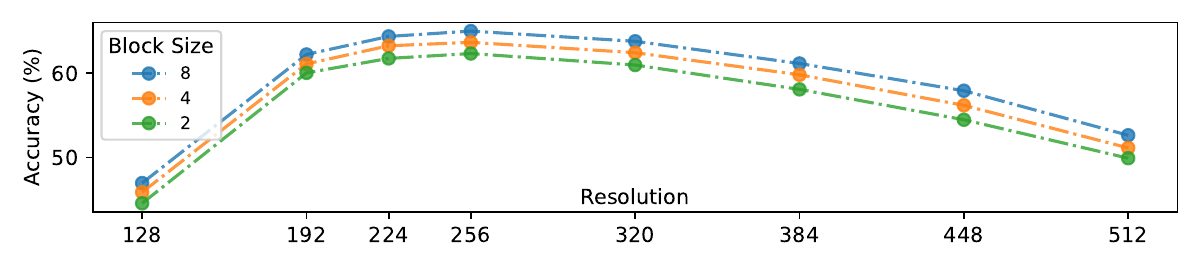}
        \vspace{-1.7em}
        \caption{LieRE}
        \vspace{.3em}
        % \label{fig:ablation-block-size-ms}
    \end{subfigure}
    \hfill
    \begin{subfigure}{1\linewidth}
        \centering
        \includegraphics[width=\linewidth]{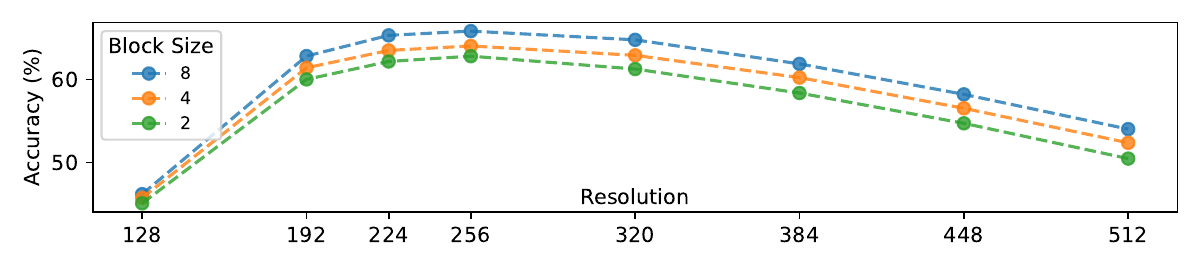}
        \vspace{-1.7em}
        \caption{\method-AP}
        \vspace{.3em}
        % \label{fig:2d-cls-pp1}
    \end{subfigure}
    \hfill
    \begin{subfigure}{1\linewidth}
        \centering
        \includegraphics[width=\linewidth]{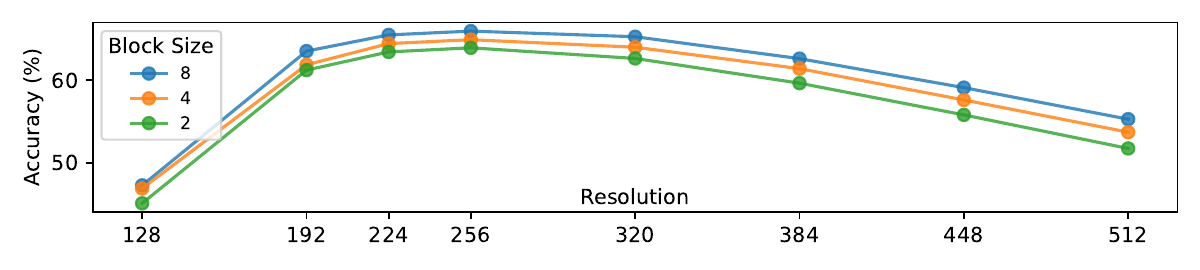}
        \vspace{-1.7em}
        \caption{\method-LD}
        % \label{fig:2d-cls-pp1}
    \end{subfigure}
    \vspace{-2em}
    \caption{Accuracy on ImageNet for various block sizes. Larger block size results in better performance.}
    \label{fig:ablation-block-size}
    \vspace{-1em}
\end{figure}
Our findings indicate that larger block sizes consistently improve performance by extending the rotation transformation space to a more significant subgroup of the particular orthogonal group by introducing additional parameters and computation time. When the block size is small, the associated costs are minimal. However, as the block size increases, the primary term of time complexity grows to \(O(Lndb^2)\) from \(O(Lnd\cdot \frac{d}{h})\), which becomes significant. Therefore, we limit the block size to a maximum of 8 to balance performance with additional costs. In other words, we select a block size of 8 to optimize performance while keeping the extra computational cost manageable.

\subsubsection{Utility of position perturbation}

In this section, we explore the impact of positional perturbations. 
We conducted experiments on \method-LD and APE using the ImageNet dataset, with the results presented in Figure \ref{fig:acc-over-eval-res}. 

\begin{figure}[h]
    \centering
    \begin{subfigure}{1\linewidth}
        \centering
        \includegraphics[width=\linewidth]{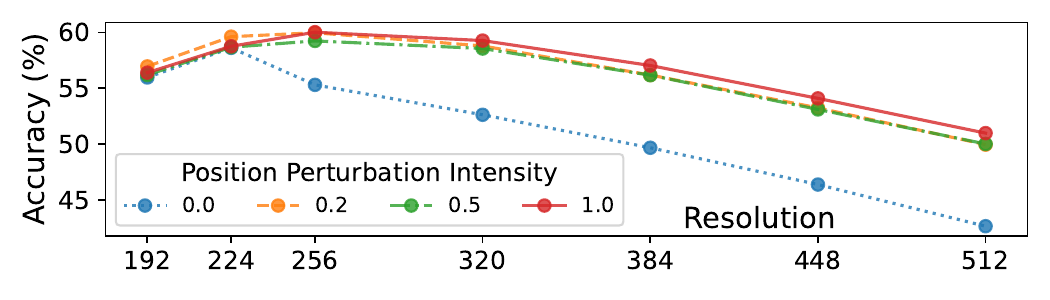}
        \vspace{-1.7em}
        \caption{APE}
        \vspace{.3em}
        \label{fig:acc-over-eval-res-ape}
    \end{subfigure}
    \hfill
    \begin{subfigure}{1\linewidth}
        \centering
        \includegraphics[width=\linewidth]{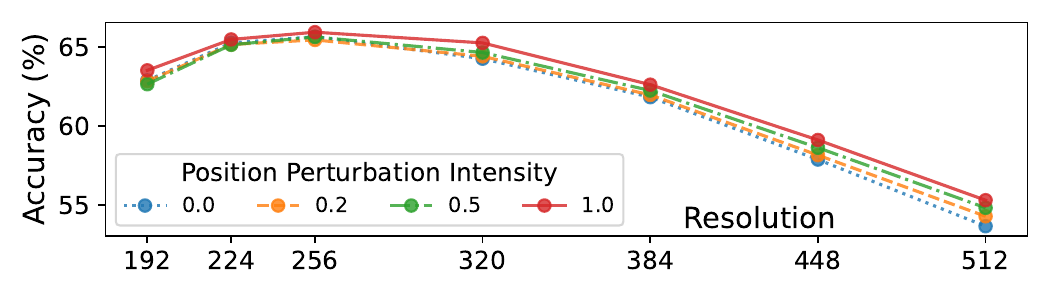}
        \vspace{-1.7em}
        \caption{ComRoPE-LD}
        \label{fig:acc-over-eval-res-ld}
    \end{subfigure}
    \vspace{-2em}
    \caption{Accuracy on ImageNet over varying position perturbation intensity.}
    \label{fig:acc-over-eval-res}
    \vspace{-1em}
\end{figure}

As shown in Figure \ref{fig:acc-over-eval-res-ape}, APE is highly sensitive to positional perturbations, leading to significant performance improvement ($+19.5\%$ when increasing intensity from 0 to 1) when these perturbations are introduced.
For RoPE with angle matrices shown in Figure \ref{fig:acc-over-eval-res-ld}, positional perturbations also resulted in some performance gains ($+2.9\%$), though the improvement was relatively modest.
This is likely due to the inherent robustness of the RoPE design with angle matrices, which is already well-equipped to handle variations in position.

\subsection{Applications}

In our approach, when the angle matrix is an all-zero matrix, the rotation matrix becomes the identity matrix, causing RoPE Attention to reduce to the standard Attention mechanism. When the block size of the angle matrix is set to 2, \method-AP effectively reduces to the commonly used RoPE Attention in language models. This demonstrates that our method can represent standard Attention and various common variants of RoPE Attention. Therefore, during the fine-tuning stage, we can replace standard Attention with our method, load the pre-trained weights, and fine-tune them under the new paradigm. In other words, \method{} can be seamlessly integrated into the fine-tuning process, even if it was not applied during pre-training. Additional experiments can be found in Appendix \switch{\ref{app:more-exp}}{B}.

\section{Conclusion}
\label{sec:conclusion}

In this work, we proposed {\method}, a novel and adaptive framework for Rotary Position Embedding (RoPE) parameterized by trainable angle matrices. 
We rigorously formulate the RoPE Equation and then establish a necessary and sufficient condition for its solution.
Our approach effectively overcomes the scalability and robustness limitations of existing RoPE methods by eliminating the need for manually designed rotation matrices and introducing a more flexible, scalable solution. Extensive experimental results show that {\method} outperforms the existing positional encoding methods across various tasks. 
Furthermore, our framework generalizes existing RoPE formulations and demonstrates the potential for broader application in Transformer models, offering insights and a solid foundation for future research in positional encoding techniques.

\clearpage

\section*{Acknowledgment}

This work is supported by the National Key R\&D Program of China (2022YFB4701400/4701402), SSTIC Grant (KJZD20230923115106012, KJZD20230923114916032, GJHZ20240218113604008) and Beijing Key Lab of Networked Multimedia.

{
    \small
    \bibliographystyle{ieeenat_fullname}
    \bibliography{main}
}

% \clearpage

\clearpage
{
    \setcounter{page}{1}
    \maketitlesupplementary
    \appendix
%     \todo{Add limitation and future work to better position the contribution of the paper} \todo{Figure 3, 6 and 7 look too crowded.}
    
\begin{table*}[t]
\centering
\begin{tabular}{c|c|ccccccccc}
\toprule
\multirow{2}{*}{\textbf{\makecell[c]{Position Encoding\\Method}}} & \multirow{2}{*}{\textbf{\makecell[c]{Perturbation\\Intensity}}} & \multicolumn{9}{c}{\textbf{Evaluation Resolution}}             \\

                                          &                                                  & \textbf{112}  & \textbf{128}   & \textbf{192}   & \textbf{224}   & \textbf{256}   & \textbf{320}   & \textbf{384}   & \textbf{448}   & \textbf{512}   \\ \midrule
\multirow{2}{*}{\textbf{APE}}          & 1                                                & 48.10          & 55.25          & 76.50          & 93.10          & 76.70          & 71.48          & 74.36          & 62.23          & 53.18          \\
                                       & 0                                                & 45.54          & 55.18          & 76.48          & 92.87          & 75.91          & 70.70          & 73.70          & 60.43          & 48.79          \\ \midrule
\multirow{2}{*}{\textbf{Vanilla RoPE}} & 1                                                & 47.28          & 54.96          & 75.69          & 93.79          & 77.66          & 72.53          & 74.72          & 65.19          & 57.34          \\
                                       & 0                                                & 48.12          & 55.21          & 76.47          & 94.12          & 76.72          & 71.59          & 74.47          & 62.28          & 53.99          \\ \midrule
\multirow{2}{*}{\textbf{LieRE}}        & 1                                                & 48.97          & 56.15          & 77.33          & {\ul 94.43}    & 78.35          & 73.20          & {\ul 77.33}    & 65.74          & 58.23          \\
                                       & 0                                                & 48.75          & 55.46          & 78.16          & 94.24          & 78.91          & 72.92          & 76.86          & 65.35          & 56.85          \\ \midrule
\multirow{2}{*}{\textbf{ComRoPE-AP}}   & 1                                                & \textbf{50.14} & 55.63          & 77.47          & 94.37          & 79.27          & 73.56          & 76.66          & \textbf{67.68} & {\ul 59.34}    \\
                                       & 0                                                & 48.06          & 55.63          & 75.72          & 94.26          & 75.75          & 70.93          & 74.72          & 64.91          & 57.98          \\ \midrule
\multirow{2}{*}{\textbf{ComRoPE-LD}}   & 1                                                & {\ul 49.89}    & \textbf{56.60} & \textbf{79.21} & 94.24          & \textbf{80.27} & \textbf{74.22} & \textbf{78.60} & {\ul 67.46}    & \textbf{60.39} \\
                                       & 0                                                & 48.70          & {\ul 56.46}    & {\ul 78.30}    & \textbf{94.48} & {\ul 79.27}    & {\ul 74.58}    & 76.86          & 66.02          & 57.68      \\ \bottomrule
\end{tabular}
\caption{Accuracy of 3D classification on UCF-101. Models are trained at a resolution of $224\times 224$ and evaluated at varying resolutions.}
\label{tab:main-3d-cls}
\end{table*}

\section{Theorems and proofs}
\label{app:theorems}

\subsection{Proof of the main theorem}

% \setcounter{theorem}{0} % 重置定理计数器
% \renewcommand{\thetheorem}{\arabic{theorem}} % 使用阿拉伯数字编号

% \begin{theorem}
% \label{the:app-rotation-commute}
% For skew-symmetric angle matrices $A_1, A_2, \cdots, A_m\in \mathbb R^{n \times n}$, defining rotation matrix $R: \mathbb R^m \rightarrow \mathbb R^{n \times n}$ as follows:
% \begin{equation*}
%    R(\bm x) = e^{A_1x_1 + A_2x_2 + \cdots + A_mx_m}
% \end{equation*}
% For $m>1$, the rotation difference $R(\bm x)R(\bm y)^\top$ can be represented by the location difference $\bm x - \bm y$ if and only if  $A_1, A_2, \cdots, A_m$ pairwise commute.
% \end{theorem}

To prove our main theorem (\ie, Theorem \ref{theo:main}), we first propose some lemmas and prove them.

\begin{lemma}
    \label{the:two_commute}
    Matrices $\mathbf{A}, \mathbf{B} \in \mathbb{R}^{n \times n}$ commute if and only if $e^{\mathbf{A}x} e^{\mathbf{B}y} = e^{\mathbf{A}x + \mathbf{B}y}$ for all $x, y \in \mathbb{R}$.
\end{lemma}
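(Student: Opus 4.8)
The plan is to prove the two implications separately, since the statement is an if-and-only-if.

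For the forward direction (commutativity implies the exponential identity), I would first observe that if $\mathbf{A}\mathbf{B}=\mathbf{B}\mathbf{A}$ then $\mathbf{A}x$ and $\mathbf{B}y$ commute for every pair of scalars $x,y$. I would then expand the product $e^{\mathbf{A}x}e^{\mathbf{B}y}$ as a Cauchy product of the two defining series and regroup by total degree: the $k$-th block is $\tfrac{1}{k!}\sum_{j=0}^{k}\binom{k}{j}(\mathbf{A}x)^j(\mathbf{B}y)^{k-j}$, which equals $\tfrac{1}{k!}(\mathbf{A}x+\mathbf{B}y)^k$ precisely because the binomial theorem is valid for the commuting matrices $\mathbf{A}x$ and $\mathbf{B}y$. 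Summing over $k$ recovers $e^{\mathbf{A}x+\mathbf{B}y}$, and the rearrangement is justified by absolute convergence of the matrix exponential series in any submultiplicative norm. This is the routine half.

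For the reverse direction (the identity for all $x,y$ implies commutativity), the idea is to extract the commutator by comparing the two sides to second order near the origin. Both $F(x,y)=e^{\mathbf{A}x}e^{\mathbf{B}y}$ and $G(x,y)=e^{\mathbf{A}x+\mathbf{B}y}$ are given by everywhere-convergent power series in $x$ and $y$, so the hypothesis $F\equiv G$ forces their coefficients to agree term by term. I would compute the coefficient of the $xy$ monomial on each side: expanding $F$ gives $\mathbf{A}\mathbf{B}$, whereas expanding $\tfrac{1}{2!}(\mathbf{A}x+\mathbf{B}y)^2$ inside $G$ gives $\tfrac{1}{2}(\mathbf{A}\mathbf{B}+\mathbf{B}\mathbf{A})$. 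Equating these yields $\mathbf{A}\mathbf{B}=\tfrac{1}{2}(\mathbf{A}\mathbf{B}+\mathbf{B}\mathbf{A})$, i.e. $\mathbf{A}\mathbf{B}=\mathbf{B}\mathbf{A}$. Equivalently, this step can be phrased as applying $\partial^2/\partial x\,\partial y$ at $x=y=0$ to both sides.

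The main obstacle is a subtlety in the reverse direction: unlike $F$, the exponential $G$ cannot be differentiated naively as if $\mathbf{A}x+\mathbf{B}y$ commuted with its own derivative, since in general $\frac{d}{dt}e^{\mathbf{M}(t)}\neq\mathbf{M}'(t)\,e^{\mathbf{M}(t)}$ when $\mathbf{M}(t)$ and $\mathbf{M}'(t)$ fail to commute. Working directly with the coefficients of $x^i y^j$ sidesteps this pitfall: it only requires the uniqueness of Taylor coefficients of a convergent double power series with matrix entries, applied entrywise, to conclude that $F\equiv G$ forces equality of coefficients. I would also note explicitly that the $xy$-coefficient is the lowest order at which the two sides can differ, so no higher-order terms interfere with the extraction of the commutator, which makes the second-order comparison both clean and conclusive.
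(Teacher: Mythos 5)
Your proposal is correct and follows essentially the same route as the paper: the forward direction is the identical Cauchy-product/binomial-theorem computation, and the reverse direction extracts the commutator from the second-order terms just as the paper does. The only cosmetic difference is that you equate the $xy$-coefficients of the two-variable series while the paper restricts to the diagonal $x=y=t$ and takes a limit of $\bigl(e^{\mathbf{A}t}e^{\mathbf{B}t}-e^{(\mathbf{A}+\mathbf{B})t}\bigr)/t^{2}$; both isolate the same quantity $\mathbf{A}\mathbf{B}-\mathbf{B}\mathbf{A}$ up to a nonzero constant.
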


\begin{proof}
\quad

\noindent\textbf{1) Necessity ($\Rightarrow$).} 
By the definition of $e^{\mathbf{A}}$, we have:
\begin{equation}
\begin{split}
     e^{\mathbf{A} + \mathbf{B}} & = \sum_{n=0}^{\infty} \frac{(\mathbf{A} + \mathbf{B})^n}{n!} \\
     & = \sum_{n=0}^{\infty} \frac{\sum_{k=0}^{n} \binom{n}{k} \mathbf{A}^k \mathbf{B}^{n-k}}{n!} \\
     & = \sum_{n=0}^{\infty} \sum_{k=0}^{n} \frac{\mathbf{A}^k \mathbf{B}^{n-k}}{k!(n-k)!} \\
     & = \left(\sum_{k=0}^{\infty} \frac{\mathbf{A}^k}{k!}\right) \left(\sum_{m=0}^{\infty} \frac{\mathbf{B}^m}{m!}\right) \\
     & = e^{\mathbf{A}} e^{\mathbf{B}}.
\end{split}
\end{equation}
Substituting $\mathbf{A}, \mathbf{B}$ with $\mathbf{A}x, \mathbf{B}y$, we obtain:
\begin{equation}
     e^{\mathbf{A}x + \mathbf{B}y} = e^{\mathbf{A}x} e^{\mathbf{B}y}.
\end{equation}

\noindent\textbf{2) Sufficiency ($\Leftarrow$).}
We have:
\begin{equation}
\begin{split}
    e^{\mathbf{A}t} e^{\mathbf{B}t} & = \left(\sum_{n=0}^{\infty} \frac{t^n \mathbf{A}^n}{n!}\right) \left(\sum_{m=0}^{\infty} \frac{t^m \mathbf{B}^m}{m!}\right) \\
    & = \mathbf{I} + t(\mathbf{A} + \mathbf{B}) + t^2 \cdot \frac{\mathbf{A}^2 + 2\mathbf{A}\mathbf{B} + \mathbf{B}^2}{4} + o(t^2),
\end{split}
\end{equation}
and
\begin{equation}
\begin{split}
    e^{(\mathbf{A}+\mathbf{B})t} & = \sum_{n=0}^{\infty} \frac{((\mathbf{A} + \mathbf{B})t)^n}{n!} \\
    & = \mathbf{I} + t(\mathbf{A} + \mathbf{B}) + t^2 \cdot \frac{(\mathbf{A} + \mathbf{B})^2}{4} + o(t^2).
\end{split}
\end{equation}
Let $t^2 f(t)$ be the difference between the two expressions above. Thus, we obtain:
\begin{equation}
\begin{split}
    f(t) & = \frac{e^{\mathbf{A}t} e^{\mathbf{B}t} - e^{(\mathbf{A} + \mathbf{B})t}}{t^2} \\
    & = \frac{\mathbf{A}^2 + 2\mathbf{A}\mathbf{B} + \mathbf{B}^2}{4} - \frac{(\mathbf{A} + \mathbf{B})^2}{4} + o(1) \\
    & = \frac{\mathbf{A}\mathbf{B} - \mathbf{B}\mathbf{A}}{4} + o(1).
\end{split}
\end{equation}
Taking the limit as $t \rightarrow 0$, we have:
\begin{equation}
\lim_{t \rightarrow 0} f(t) = \frac{\mathbf{A}\mathbf{B} - \mathbf{B}\mathbf{A}}{4}.
\end{equation}
Since $e^{\mathbf{A}x} e^{\mathbf{B}y} = e^{\mathbf{A}x + \mathbf{B}y}$, we have $f(t) = 0$, which implies $\mathbf{A}\mathbf{B} = \mathbf{B}\mathbf{A}$.

\end{proof}

\begin{lemma}
\label{the:multi_commute}
Matrices $\mathbf{A}_1, \mathbf{A}_2, \ldots, \mathbf{A}_m \in \mathbb{R}^{n \times n} \; (m > 1)$ pairwise commute if and only if:
\begin{equation}
    e^{\mathbf{A}_1 x_1} e^{\mathbf{A}_2 x_2} \cdots e^{\mathbf{A}_m x_m} = e^{\mathbf{A}_1 x_1 + \mathbf{A}_2 x_2 + \cdots + \mathbf{A}_m x_m}
\end{equation}
for all $x_1, x_2, \ldots, x_m \in \mathbb{R}$.
\end{lemma}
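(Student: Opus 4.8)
The plan is to reduce everything to the two-matrix case already settled in Lemma \ref{the:two_commute}: I would prove the necessity direction ($\Rightarrow$) by induction on $m$ and the sufficiency direction ($\Leftarrow$) by specializing the free parameters so that only two exponential factors survive. The base case $m=2$ of the induction is exactly Lemma \ref{the:two_commute}, so the work is entirely in setting up the inductive step and the specialization cleanly.

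For necessity, assume $\mathbf{A}_1,\ldots,\mathbf{A}_m$ pairwise commute and suppose the claim holds for $m-1$ matrices. Fix arbitrary $x_1,\ldots,x_m\in\mathbb{R}$ and set $\mathbf{C}=\sum_{i=1}^{m-1}\mathbf{A}_i x_i$. Since each $\mathbf{A}_i$ with $i<m$ commutes with $\mathbf{A}_m$, any linear combination of them does too, so $\mathbf{C}$ commutes with $\mathbf{A}_m x_m$. The binomial computation in the necessity half of Lemma \ref{the:two_commute} used only the commutativity of the two matrices involved, so it applies verbatim to the pair $(\mathbf{C},\,\mathbf{A}_m x_m)$ and yields $e^{\mathbf{C}}e^{\mathbf{A}_m x_m}=e^{\mathbf{C}+\mathbf{A}_m x_m}=e^{\sum_{i=1}^{m}\mathbf{A}_i x_i}$. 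Applying the inductive hypothesis to rewrite $e^{\mathbf{C}}=e^{\sum_{i=1}^{m-1}\mathbf{A}_i x_i}=\prod_{i=1}^{m-1}e^{\mathbf{A}_i x_i}$ and substituting gives the desired product formula.

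For sufficiency, assume the product identity holds for all real parameters. To extract commutativity of a single pair, fix indices $i<j$ and set $x_k=0$ for every $k\notin\{i,j\}$. Each suppressed factor becomes $e^{\mathbf{A}_k\cdot 0}=\mathbf{I}$, and since the identity matrix commutes with everything the ordered product collapses to $e^{\mathbf{A}_i x_i}e^{\mathbf{A}_j x_j}$, while the right-hand side collapses to $e^{\mathbf{A}_i x_i+\mathbf{A}_j x_j}$. Thus $e^{\mathbf{A}_i x_i}e^{\mathbf{A}_j x_j}=e^{\mathbf{A}_i x_i+\mathbf{A}_j x_j}$ for all $x_i,x_j\in\mathbb{R}$, and the sufficiency half of Lemma \ref{the:two_commute} forces $\mathbf{A}_i\mathbf{A}_j=\mathbf{A}_j\mathbf{A}_i$. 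Since $i$ and $j$ were arbitrary, the whole set pairwise commutes.

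The only genuinely delicate points are bookkeeping rather than deep: in necessity I must justify applying the two-matrix result to the composite matrix $\mathbf{C}$ (handled by observing that the relevant step of Lemma \ref{the:two_commute} never used that its arguments were single $\mathbf{A}_i$'s, only that they commute), and in sufficiency I must be sure the ordered product genuinely reduces to $e^{\mathbf{A}_i x_i}e^{\mathbf{A}_j x_j}$ with the correct $i<j$ ordering so that the hypotheses of Lemma \ref{the:two_commute} apply directly. Neither presents a real obstacle, so I expect the main care to lie in writing the induction precisely rather than in any hard estimate.
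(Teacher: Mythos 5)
Your proposal is correct and follows essentially the same route as the paper: induction on $m$ for the necessity direction via the splitting $\left(\sum_{i<m}\mathbf{A}_i x_i\right) + \mathbf{A}_m x_m$ together with Lemma \ref{the:two_commute}, and specialization of the free variables to zero to reduce sufficiency to the two-matrix case. The only (harmless) difference is that your sufficiency argument treats each pair $(i,j)$ directly in one step rather than through the paper's inductive bookkeeping, which is a slight streamlining but not a different method.
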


\begin{proof}
For $m = 2$, the theorem holds by Lemma \ref{the:two_commute}. Suppose the theorem holds for all $2 \leq m \leq k$. We prove it for $m = k+1$.

\noindent\textbf{1) Necessity ($\Rightarrow$).} 
Assuming:
\begin{equation}
    e^{\mathbf{A}_1 x_1} e^{\mathbf{A}_2 x_2} \cdots e^{\mathbf{A}_k x_k} = e^{\mathbf{A}_1 x_1 + \mathbf{A}_2 x_2 + \cdots + \mathbf{A}_k x_k},
\end{equation}
we split $\mathbf{A}_1 x_1 + \mathbf{A}_2 x_2 + \cdots + \mathbf{A}_{k+1} x_{k+1}$ into two parts:
\begin{equation}
\begin{split}
    & \mathbf{A}_1 x_1 + \mathbf{A}_2 x_2 + \cdots + \mathbf{A}_{k+1} x_{k+1} \\
    & = (\mathbf{A}_1 x_1 + \mathbf{A}_2 x_2 + \cdots + \mathbf{A}_k x_k) + (\mathbf{A}_{k+1} x_{k+1}).
\end{split}
\end{equation}
Since $\mathbf{A}_1, \mathbf{A}_2, \ldots, \mathbf{A}_{k+1}$ commute in pairs, $\mathbf{A}_1 x_1 + \mathbf{A}_2 x_2 + \cdots + \mathbf{A}_k x_k$ and $\mathbf{A}_{k+1} x_{k+1}$ also commute. Thus:
\begin{equation}
\begin{split}
    &e^{\mathbf{A}_1 x_1 + \mathbf{A}_2 x_2 + \cdots + \mathbf{A}_{k+1} x_{k+1}} \\
    &= e^{(\mathbf{A}_1 x_1 + \mathbf{A}_2 x_2 + \cdots + \mathbf{A}_k x_k)} e^{\mathbf{A}_{k+1} x_{k+1}} \\
    &= e^{\mathbf{A}_1 x_1} e^{\mathbf{A}_2 x_2} \cdots e^{\mathbf{A}_{k+1} x_{k+1}}.
\end{split}
\end{equation}

\noindent\textbf{2) Sufficiency ($\Leftarrow$).} 
Let $x_{k+1} = 0$. Then:
\begin{equation}
    e^{\mathbf{A}_1 x_1} e^{\mathbf{A}_2 x_2} \cdots e^{\mathbf{A}_k x_k} = e^{\mathbf{A}_1 x_1 + \mathbf{A}_2 x_2 + \cdots + \mathbf{A}_k x_k},
\end{equation}
implying that $\mathbf{A}_1, \mathbf{A}_2, \ldots, \mathbf{A}_k$ commute in pairs.

For any $p \in \{1, 2, \ldots, k\}$, set all $x_i = 0$ except for $x_p$ and $x_{k+1}$. This yields:
\begin{equation}
    e^{\mathbf{A}_p x_p} e^{\mathbf{A}_{k+1} x_{k+1}} = e^{\mathbf{A}_p x_p + \mathbf{A}_{k+1} x_{k+1}},
\end{equation}
which implies $\mathbf{A}_p$ and $\mathbf{A}_{k+1}$ commute. Thus, $\mathbf{A}_1, \mathbf{A}_2, \ldots, \mathbf{A}_{k+1}$ commute in pairs.

\end{proof}

\begin{lemma}
\label{the:f_exists}
Matrices $\mathbf{A}_1, \mathbf{A}_2, \ldots, \mathbf{A}_m \in \mathbb{R}^{n \times n} \; (m > 1)$ pairwise commute if and only if 
there exists a function $f: \mathbb{R}^m \rightarrow \mathbb{R}^{n \times n}$ such that:
\begin{equation}
\label{eq:f_eqs_exey}
\begin{split}
    &  f(x_1 + y_1, x_2 + y_2, \ldots, x_m + y_m)  \\
    & =  e^{\mathbf{A}_1 x_1 + \mathbf{A}_2 x_2 + \cdots + \mathbf{A}_m x_m} e^{\mathbf{A}_1 y_1 + \mathbf{A}_2 y_2 + \cdots + \mathbf{A}_m y_m}
\end{split}
\end{equation}
for all $x_1, y_1, x_2, y_2, \ldots, x_m, y_m \in \mathbb{R}$.
\end{lemma}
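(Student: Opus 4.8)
The plan is to prove both directions by reducing to the two already-established lemmas, with the real content of the backward direction resting on a single substitution that pins down $f$ uniquely. The forward direction is essentially immediate from commutativity of linear combinations, so I would dispatch it first.

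For the forward direction ($\Rightarrow$), assume the $\mathbf{A}_i$ pairwise commute. Then for any $x_i,y_i\in\mathbb{R}$ the two linear combinations $\mathbf{M}=\sum_{i=1}^m \mathbf{A}_i x_i$ and $\mathbf{N}=\sum_{i=1}^m \mathbf{A}_i y_i$ themselves commute, since $\mathbf{M}\mathbf{N}=\sum_{i,j} x_i y_j \mathbf{A}_i\mathbf{A}_j=\sum_{i,j} x_i y_j \mathbf{A}_j\mathbf{A}_i=\mathbf{N}\mathbf{M}$. Applying the necessity direction of Lemma \ref{the:two_commute} to the commuting pair $\mathbf{M},\mathbf{N}$ (with $x=y=1$) gives $e^{\mathbf{M}}e^{\mathbf{N}}=e^{\mathbf{M}+\mathbf{N}}$. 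Since $\mathbf{M}+\mathbf{N}=\sum_{i=1}^m \mathbf{A}_i(x_i+y_i)$, the function $f(z_1,\ldots,z_m):=\exp\!\big(\sum_{i=1}^m \mathbf{A}_i z_i\big)$ satisfies \cref{eq:f_eqs_exey} exactly, which exhibits the required $f$.

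For the backward direction ($\Leftarrow$), assume such an $f$ exists. The key step is to set all $y_i=0$ in \cref{eq:f_eqs_exey}: the right-hand side collapses to $\big(\sum \mathbf{A}_i x_i\big)$-exponential times $e^{\mathbf{O}}=\mathbf{I}$, forcing
\[
f(x_1,\ldots,x_m)=\exp\!\Big(\sum_{i=1}^m \mathbf{A}_i x_i\Big),
\]
with no remaining freedom in $f$. Substituting this expression back into \cref{eq:f_eqs_exey} then yields the two-variable exponential additivity $\exp\!\big(\sum_i \mathbf{A}_i(x_i+y_i)\big)=\exp\!\big(\sum_i \mathbf{A}_i x_i\big)\exp\!\big(\sum_i \mathbf{A}_i y_i\big)$ for all choices of $x_i,y_i$. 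To extract pairwise commutativity I would specialize to one pair of axes at a time: fixing $p\neq q$ and taking $\mathbf{x}=s\,\mathbf{e}_p$, $\mathbf{y}=t\,\mathbf{e}_q$ (all other coordinates zero) reduces the identity to $e^{\mathbf{A}_p s}e^{\mathbf{A}_q t}=e^{\mathbf{A}_p s+\mathbf{A}_q t}$ for all $s,t\in\mathbb{R}$, and the sufficiency direction of Lemma \ref{the:two_commute} then forces $\mathbf{A}_p\mathbf{A}_q=\mathbf{A}_q\mathbf{A}_p$. Ranging over all pairs $p\neq q$ establishes pairwise commutativity.

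The lemma is not hard, so I would frame the $y_i=0$ substitution as the crux rather than as an obstacle: it is precisely what removes the existential quantifier on $f$ and converts the hypothesis into a concrete functional equation, after which the problem reduces cleanly to the two-matrix case already settled in Lemma \ref{the:two_commute}. The only point I would be careful to state explicitly is that this substitution determines $f$ completely, so that no generality is lost when we plug the closed form back in.
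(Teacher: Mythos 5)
Your proof is correct and follows essentially the same route as the paper's: the forward direction exhibits $f(\bm z)=\exp\bigl(\sum_i \mathbf{A}_i z_i\bigr)$ (you justify it via Lemma \ref{the:two_commute} applied to the two commuting linear combinations, the paper via Lemma \ref{the:multi_commute}, a negligible difference), and the backward direction uses the same $y_i=0$ substitution to pin down $f$, followed by specializing to one pair of axes at a time and invoking the sufficiency half of Lemma \ref{the:two_commute}. Your explicit remark that the substitution determines $f$ uniquely, so nothing is lost when plugging the closed form back in, is a slight clarity improvement over the paper's terser "comparing this with Equation \eqref{eq:f_eqs_exey}" step.
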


\begin{proof}
\noindent\textbf{1) Necessity ($\Rightarrow$).} 
By Lemma \ref{the:multi_commute}, we can easily verify that the following $f$ satisfies the condition:
\begin{equation}
\begin{split}
    & f(x_1 + y_1, x_2 + y_2, \ldots, x_m + y_m) \\
    & = e^{\mathbf{A}_1 (x_1 + y_1) + \mathbf{A}_2 (x_2 + y_2) + \cdots + \mathbf{A}_m (x_m + y_m)}.
\end{split}
\end{equation}

\noindent\textbf{2) Sufficiency ($\Leftarrow$).}
From Equation \ref{eq:f_eqs_exey}, let $x_k$ be replaced with $x_k + y_k$ and $y_k$ with $0$. We obtain:
\begin{equation}
\begin{split}
    & f(x_1 + y_1, x_2 + y_2, \ldots, x_m + y_m) \\
    & = e^{\mathbf{A}_1 (x_1 + y_1) + \cdots + \mathbf{A}_m (x_m + y_m)} e^{\mathbf{A}_1 \cdot 0 + \cdots + \mathbf{A}_m \cdot 0} \\
    & = e^{\mathbf{A}_1 (x_1 + y_1) + \cdots + \mathbf{A}_m (x_m + y_m)}.
\end{split}
\end{equation}
Comparing this with Equation \eqref{eq:f_eqs_exey}, we get:
\begin{equation}
\begin{split}
    & e^{\mathbf{A}_1 x_1 + \mathbf{A}_2 x_2 + \cdots + \mathbf{A}_m x_m} e^{\mathbf{A}_1 y_1 + \mathbf{A}_2 y_2 + \cdots + \mathbf{A}_m y_m}  \\
    &= e^{\mathbf{A}_1 (x_1 + y_1) + \cdots + \mathbf{A}_m (x_m + y_m)}.
\end{split}
\end{equation}
For any $i, j \in \{1, 2, \ldots, m\}$, set $x_{k} = 0$ for all $k \neq i$ and $y_{k} = 0$ for all $k \neq j$. This leads to:
\begin{equation}
    e^{\mathbf{A}_i x_i} e^{\mathbf{A}_j y_j} = e^{\mathbf{A}_i x_i + \mathbf{A}_j y_j}.
\end{equation}
By Lemma \ref{the:two_commute}, this implies that $\mathbf{A}_i$ and $\mathbf{A}_j$ commute.
Therefore, matrices $\mathbf{A}_1, \mathbf{A}_2, \ldots, \mathbf{A}_m \in \mathbb{R}^{n \times n} \; (m > 1)$ pairwise commute.

\end{proof}

% \begin{theorem}
% % \label{theo:main}
% % \label{the:rotation-commute}
% Let \( \mathbf{R}(\;\cdot\;; \mathcal{A}): \mathbb{R}^N \rightarrow \mathbb{R}^{d \times d} \; (N>1)\) be a rotation matrix function parameterized by angle matrices $\mathcal A$.
% The rotation difference $\mathbf R(\bm x)^\top\mathbf R(\bm y)$ 
% can be represented by 
% the location difference $\bm y - \bm x$ 
% if and only if  $\mathcal A$ pairwise commute.
% \end{theorem}

\begin{proof}[Proof of Theorem \ref{theo:main}]
Recall that $e^{\mathbf{A}}$ is an orthogonal matrix if $\mathbf{A}$ is skew-symmetric, which implies $\mathbf R(\bm{x};\mathcal A)^\top = \mathbf R(\bm{x};\mathcal A)^{-1} = \mathbf R(-\bm{x};\mathcal A)$.
Thus, we have:
\begin{equation}
\label{eq:prf-of-theo}
\begin{split}
    & \mathbf R(\bm x; \mathcal{A})^\top\mathbf R(\bm y; \mathcal{A}) \\
    &= e^{-\mathbf{A}_1 x_1 - \mathbf{A}_2 x_2 - \cdots - \mathbf{A}_N x_N} 
    e^{\mathbf{A}_1 y_1 + \mathbf{A}_2 y_2 + \cdots + \mathbf{A}_N y_N}.
\end{split}
\end{equation}
By Lemma \ref{the:f_exists} and Equation \ref{eq:prf-of-theo}, 
$\mathcal{A}$ pairwise commute if and only if there exists a function $f: \mathbb{R}^N \rightarrow \mathbb{R}^{d \times d}$ such that:
\begin{equation}
\label{eq:f_eqs_exey}
\begin{split}
    &  f(y_1 - x_1, y_2 - x_2, \ldots, y_N - x_N)  \\
    & =  \mathbf R(\bm x; \mathcal{A})^\top\mathbf R(\bm y; \mathcal{A}).
\end{split}
\end{equation}
Therefore, the theorem holds.

\end{proof}

\subsection{Explanation of rotation matrix and its exponential representation}

Following the definition in \cite{Grove2001ClassicalGA}, we first demonstrate the definition of rotation group and rotation matrix:

\begin{definition}[Rotation Group and Rotation Matrix]
A \textbf{special orthogonal group} in $\mathbb{R}^n$, denoted $SO(n)$, is the set of all $n \times n$ orthogonal matrices with determinant 1, \ie,
\[
SO(n) = \{ \mathbf R \in \mathbb{R}^{n \times n} \mid \mathbf  R^\top \mathbf R = \mathbf I, \det(\mathbf R) = 1 \}.
\]
We use the terms \textbf{rotation group} and \textbf{special orthogonal group} interchangeably. Any matrix in the rotation group is called a \textbf{rotation matrix}.
\end{definition}

To establish Definition \ref{def:extend_rotation_matrix}, there is a necessary proposition to ensure the correctness of the exponential representation of a rotation matrix:

\begin{proposition}
\label{prop:lie-algebra}
    Any rotation matrix $\mathbf R$ can be represented by $\exp(\mathbf A)$ where $\mathbf A$ is a skew-symmetric matrix.
\end{proposition}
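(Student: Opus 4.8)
The plan is to invoke the real canonical form of an orthogonal matrix and then reduce the problem to the trivial two-dimensional case. First I would observe that every real orthogonal matrix is normal, since $\mathbf{R}^\top \mathbf{R} = \mathbf{R}\mathbf{R}^\top = \mathbf{I}$. By the real spectral theory of normal operators, there then exists $\mathbf{Q} \in O(d)$ such that $\mathbf{Q}^\top \mathbf{R}\, \mathbf{Q}$ is block diagonal, each block being either a scalar $\pm 1$ or a $2\times 2$ planar rotation
\[
\begin{pmatrix} \cos\theta & -\sin\theta \\ \sin\theta & \cos\theta \end{pmatrix}.
\]
This rests on the fact that the eigenvalues of $\mathbf{R}$ lie on the unit circle, with the non-real ones occurring in conjugate pairs $e^{\pm i\theta}$ whose two-dimensional real invariant subspaces furnish the rotation blocks.

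Next I would use the hypothesis $\det \mathbf{R} = 1$. Since the determinant equals the product of the block determinants and each rotation block has determinant $1$, the number of $-1$ scalar blocks must be even. I would therefore pair up the $-1$ entries and rewrite each pair $\mathrm{diag}(-1,-1)$ as the planar rotation of angle $\theta = \pi$. After this regrouping the canonical form consists only of $+1$ scalar blocks and genuine $2\times 2$ rotation blocks.

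The third step is the elementary observation that each building block is the exponential of a skew-symmetric matrix: $1 = \exp(0)$ and
\[
\begin{pmatrix} \cos\theta & -\sin\theta \\ \sin\theta & \cos\theta \end{pmatrix} = \exp\!\begin{pmatrix} 0 & -\theta \\ \theta & 0 \end{pmatrix},
\]
verified directly from the matrix exponential series. Assembling these exponents into a single block-diagonal matrix $\mathbf{S}$, which is skew-symmetric because each block is, gives $\mathbf{Q}^\top \mathbf{R}\, \mathbf{Q} = \exp(\mathbf{S})$. Finally I would set $\mathbf{A} = \mathbf{Q}\mathbf{S}\mathbf{Q}^\top$ and use conjugation invariance of the exponential, $\exp(\mathbf{Q}\mathbf{S}\mathbf{Q}^\top) = \mathbf{Q}\exp(\mathbf{S})\mathbf{Q}^\top$, together with $\mathbf{A}^\top = \mathbf{Q}\mathbf{S}^\top\mathbf{Q}^\top = -\mathbf{A}$, to conclude that $\mathbf{R} = \exp(\mathbf{A})$ with $\mathbf{A}$ skew-symmetric.

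I expect the main obstacle to be the first step: establishing the real block-diagonal canonical form for orthogonal matrices, which is the substantive linear-algebra input. Although classical, a self-contained derivation requires the real spectral theory of normal operators and care in extracting two-dimensional real invariant subspaces from conjugate eigenvalue pairs. The subsequent bookkeeping — in particular the parity argument that lets the $-1$ eigenvalues combine into rotation blocks under the constraint $\det \mathbf{R} = 1$ — is the only place the \emph{special} (rather than merely orthogonal) hypothesis is used, and it is essential: without it a lone $-1$ block would admit no skew-symmetric logarithm.
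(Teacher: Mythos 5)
Your proof is correct. Note, however, that the paper does not actually prove this proposition: it only cites it as a standard Lie-theoretic fact (surjectivity of the exponential map from $\mathfrak{so}(n)$ onto $SO(n)$, referring to Gallier). Your argument --- real normal form of an orthogonal matrix into $\pm 1$ and $2\times 2$ rotation blocks, the parity argument pairing the $-1$ eigenvalues via $\det \mathbf{R}=1$, exponentiating each block, and conjugating back --- is the standard self-contained proof of exactly that fact, and it correctly identifies where the \emph{special} orthogonal hypothesis is indispensable. In effect you have supplied the proof the paper delegates to its reference; there is no gap.
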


Proposition \ref{prop:lie-algebra} is a well-known result in Lie theory, as detailed in \cite{Gallier2001BasicsOC}. Specifically, the matrix $\mathbf{R}$ in Proposition \ref{prop:lie-algebra} belongs to the Lie group $SO(n)$. The associated Lie algebra of this group is $\mathfrak{so}(n)$, within which the skew-symmetric matrix $\mathbf{A}$ resides.

    \section{More experiments}

\label{app:more-exp}

\subsection{3D classification}

To assess the ability to handle higher dimensions beyond 2D, we conduct a 3D classification task on UCF-101 \citep{soomro2012ucf101}. 
The details of the model and configuration can be found in Appendix \ref{app:details}.

The results shown in Table \ref{tab:main-3d-cls} demonstrate similar results in 2D experiments, that {\method} performs best when resolution increases beyond the training resolution, displaying the resolution robustness of {\method}.

\subsection{Fine-tune on pre-trained model}

% Recall that we represent the RoPE function parameterized by angle matrices by defining Equation \ref{eq:exp-solution}.
% If all elements in $\mathcal{A} = \{\mathbf A_i\}_{i=1}^{N}$ are initialized as zero matrices (\ie, $\forall i (\mathbf{A}_i = \mathbf{O})$), the behavior of this RoPE function actually degenerates into a standard attention mechanism because we get $\mathbf{R}(\bm{x}; \mathcal{A}) = \exp(\mathbf{O}) = \mathbf{I}$ for any $\bm x$. 
% If $\mathcal{A} = \{\mathbf A_i\}_{i=1}^{N}$ are initialized as described in Appendix \ref{app:reformulation}, we can obtain the vanilla RoPE.

% This demonstrates that our method can represent standard attention and various common variants of RoPE Attention. Therefore, during the fine-tuning stage, we can replace standard attention or the vanilla RoPE with our method, load the pre-trained weights, and fine-tune them under the new paradigm. In other words, \method{} can be seamlessly integrated into the fine-tuning process, even if it was not applied during pre-training.

Recall that we represent the RoPE function parameterized by angle matrices as defined in Equation~\ref{eq:exp-solution}. If all elements in $\mathcal{A} = \{\mathbf{A}_i\}_{i=1}^{N}$ are initialized as zero matrices (i.e., $\forall i, \mathbf{A}_i = \mathbf{O}$), the behavior of this RoPE function degenerates into a standard attention mechanism. This is because, in this case, $\mathbf{R}(\bm{x}; \mathcal{A}) = \exp(\mathbf{O}) = \mathbf{I}$ for any input $\bm{x}$. 

On the other hand, if $\mathcal{A} = \{\mathbf{A}_i\}_{i=1}^{N}$ is initialized as described in Appendix~\ref{app:reformulation}, the RoPE function reduces to the vanilla RoPE formulation.

These observations demonstrate that our method can represent both the standard attention mechanism and various common RoPE attention variants. Therefore, during fine-tuning, standard attention or vanilla RoPE can be replaced with our method. Pre-trained weights can be loaded and fine-tuned under this new paradigm seamlessly, even if \method{} was not applied during the pre-training phase.

As an example, we fine-tune the Vision Transformer pre-trained in CLIP \citep{radford2021learning} on ImageNet by simply replacing the standard attention mechanism with each RoPE method. Specifically, we fine-tune the model for 4 epochs using a batch size of 3456 and a learning rate of $3 \times 10^{-4}$. 

The results, presented in Table~\ref{tab:finetune-on-imagenet}, show that \method-LD achieves the best performance. An interesting observation is that vanilla RoPE exhibits the lowest accuracy among all five methods. This is likely because its fixed and manually defined parameters cannot be loaded seamlessly. In other words, it must adapt the pre-trained latent space during fine-tuning to effectively complete the task, which may result in suboptimal performance.

\begin{table}[h]
    \centering
    \begin{tabular}{c|c}
        \toprule
        Method & Accuracy \\ \midrule
        APE & 79.91 \\
        Vanilla RoPE & 79.82 \\
        LieRE & {\ul 80.12} \\
        \method-AP (ours) & 80.11 \\
        \method-LD (ours) & \textbf{80.17} \\ \bottomrule
    \end{tabular}
    \caption{Accuracy of fine-tuned models with different positional encoding methods on ImageNet.}
    \label{tab:finetune-on-imagenet}
\end{table}
    \section{Details of configuration}
\label{app:details}

\subsection{Configuration of 2D classification}

Configuration of 2D classfication task is shown in Table \ref{tab:config-2d-cls}.

\begin{table}[h]
    \centering
    \begin{tabular}{c|c}
        \toprule
        \textbf{Key} & \textbf{Value} \\
        \midrule
        Layers & 12 \\
        Image Size & 224 \\
        Patch Size & 16 \\
        Hidden Dimension & 768 \\
        Attention Heads & 12 \\
        Batch Size & 6144 \\
        Optimizer & AdamW \\
        Weight Decay & 0.01 \\
        Learning Rate & $10^{-3}$ \\
        LR Scheduler & cosine \\
        Warmup Ratio & 0.02 \\
        Epochs & 200 \\
        \bottomrule
    \end{tabular}
    \caption{Model and training configuration of 2D classification experiments.}
    \label{tab:config-2d-cls}
\end{table}

\subsection{Configuration of 3D classification}

Because the vanilla RoPE and \method-AP require that the head dimension be a multiple coordinate dimension, standard ViT-Base is not applicable. 
We modified the model parameters to make it possible to conduct experiments on all of the five positional encoding methods. 
Besides, because the data size of UCF-101 is not too large, using a smaller model is more appropriate.
All the details are shown in Table \ref{tab:config-3d-cls}.

\begin{table}[h]
    \centering
    \begin{tabular}{c|c}
        \toprule
        \textbf{Key} & \textbf{Value} \\
        \midrule
        Layers & 8 \\
        Image Size & 224 \\
        Frame Count & 8 \\
        Patch Size & 16 \\
        Hidden Dimension & 384 \\
        Attention Heads & 8 \\
        Batch Size & 768 \\
        Optimizer & AdamW \\
        Weight Decay & 0.01 \\
        Learning Rate & $1.2\times10^{-4}$ \\
        LR Scheduler & cosine \\
        Warmup Ratio & 0.02 \\
        Epochs & 80 \\
        \bottomrule
    \end{tabular}
    \caption{Model and training configuration of 3D classification experiments.}
    \label{tab:config-3d-cls}
\end{table}
    \section{Reformulation of baseline RoPE methods in detail}
\label{app:reformulation}

\begin{table*}[th]
    \centering
    \begin{tabular}{cccc}
        \toprule
        \textbf{Positional Encoding Method} & \textbf{Commutativity} & \textbf{Extra Parameters} & \textbf{Extra Time Complexity} \\
        \midrule
        APE & -- & $nd$ & $O(nd)$ \\
        Vanilla RoPE & Yes & 0 & $O(Lnd(bN+b^2+\frac{d}{h}))\approx O(\frac{Lnd^2}{h})$ \\
        LieRE & Commonly Not & $LNdb$ & $O(Lnd(bN+b^2+\frac{d}{h}))$ \\
        \method-AP (ours) & Yes & $Ldb$ & $O(Lnd(bN+b^2+\frac{d}{h}))$ \\
        \method-LD (ours) & Yes & $Ld(b + \frac{N}{b})$ & $O(Lnd(bN+b^2+\frac{d}{h}))$ \\
        \bottomrule
    \end{tabular}
    \caption{Comparison of different types of positional encoding methods. $n$ represents for count of patches (tokens), $d$ represents for dimension of hidden states, $L$ represents for count of layers, $b$ represents for block size, $N$ represents for count of axes, and $h$ represents the count of attention heads.}
    \label{tab:types-of-pe}
\end{table*}

\subsection{Vanilla RoPE}

Firstly, note that we can represent a 2D rotation matrix in the exponential form:
\begin{equation}
\left(       
  \begin{array}{cc}  
    \mathrm{cos}(\alpha) & -\mathrm{sin}(\alpha) \\
    \mathrm{sin}(\alpha) & \mathrm{cos}(\alpha)
  \end{array}
\right) 
=
\exp (\left(       
  \begin{array}{cc}  
    0 & -\alpha \\
    \alpha & 0
  \end{array}
\right) )
\end{equation}
The solution proposed by RoFormer, which we call vanilla RoPE here, can be regarded as a special type of \method-AP with block size $2$ and non-trainable $\mathbf P_j$ in Equation \ref{eq:rope-ap} where:
\begin{equation}
\begin{split}
\mathbf P_j & =  \left(       
  \begin{array}{cc}  
    \mathrm{cos}(m\theta^{\frac{2N}{d} \cdot j}) & -\mathrm{sin}(m\theta^{\frac{2N}{d} \cdot j}) \\
    \mathrm{sin}(m\theta^{\frac{2N}{d} \cdot j}) & \mathrm{cos}(m\theta^{\frac{2N}{d} \cdot j})
  \end{array}
\right) \\ 
& = \exp(m\theta^{\frac{2N}{d} \cdot j}\left(       
  \begin{array}{cc}  
    0 & -1 \\
    1 & 0
  \end{array}
\right))
\end{split}
\end{equation}
In practice, RoFormer adopts $\theta = 10000^{-1}$ as the hyper-parameter of the rotation base.

\subsection{LieRE}

% For LieRE, blocks are independent and trainable. Therefore, we directly let $B_{ij}$ in Equation \ref{eq:definition-of-matrix-b} equals to
% \begin{equation}
%     \mathbf B_{ij} =\mathbf P_{ij} - \mathbf P_{ij}^\top
% \end{equation}
% where $\mathbf P_{ij}$ is trainable.

For LieRE, the blocks are independent and trainable. Hence, we directly define $B_{ij}$ in Equation~\ref{eq:definition-of-matrix-b} as:
\begin{equation}
    \mathbf{B}_{ij} = \mathbf{P}_{ij} - \mathbf{P}_{ij}^\top,
\end{equation}
where $\mathbf{P}_{ij}$ is a trainable matrix.

\section{Analysis and comparison of complexity and extra consumption}

\label{app:complexity}

Table~\ref{tab:types-of-pe} presents an overview of the properties of the positional encoding methods evaluated in this work. Specifically, the table highlights their commutativity (\ie, the commutativity of angle matrices when represented in the RoPE form parameterized by angle matrices), the number of additional parameters, and the extra time complexity introduced by the positional encoding module.

\subsection{APE}

For a Transformer that takes $n$ embeddings with $d$ features as inputs, the extra parameters of position encoding are the tensors in the position code book, i.e., $n \times d$.
The extra computation is to add position embeddings onto the original features. Therefore, the extra time complexity is $O(n \times d)$.

\subsection{RoPE parameterized by angle matrices}

We unify RoPE with angle matrices whose rotation process is presented in Algorithm \ref{alg:rotation}, where $n, h, d, b, N$ represents sequence length, number of heads, dimension of hidden states, block size, and number of axes respectively. In this part, we focus on extra parameters and time complexity on each layer.

\begin{algorithm}
    \caption{Rotation of query and key matrices}
    \label{alg:rotation}
    \begin{algorithmic}
        \STATE \textbf{In 1:} query matrix $\mathbf Q$ with shape $(n, h, \frac{d}{h})$
        \STATE \textbf{In 2:} key matrix $\mathbf K$ with shape $(n, h, \frac{d}{h})$
        \STATE \textbf{In 3:} angle base matrix $\mathbf A$ with shape $(N, h, \frac{d}{hb}, b, b)$
        \STATE \textbf{In 4:} patch positions $\mathbf P$ with shape $(n, N)$
        % \STATE \textbf{Input:} query matrix $Q$, key matrix $K$, angle base matrix $A$, patch positions $P$
        \STATE \textbf{Out:} rotated query and key matrices $\hat{\mathbf  Q}, \hat{\mathbf  K}$

        \vspace{.6em}
        
        \FOR{$axis = 1$ to $N$}
            \STATE $\mathbf  M_{axis} \gets \mathbf  A_{axis}\odot \mathbf  P_{axis}$
        \ENDFOR
        \STATE $\mathbf  M \gets \sum\mathbf  M_{axis}$ where $M$ has a shape of $(n, h, \frac{d}{hb}, b, b)$
        \STATE $\mathbf  R \gets \text{diag}(e^{\mathbf M}, \text{dim}=2)$ with shape $(n, h, \frac{d}{h}, \frac{d}{h})$
        \STATE $\mathbf{\hat Q} \gets  \mathbf R  \mathbf Q$, $\mathbf{\hat K} \gets \mathbf  R \mathbf  K$

        \vspace{.6em}
        
        \RETURN $\mathbf{\hat Q}, \mathbf{\hat K}$
    \end{algorithmic}
\end{algorithm}

Angle base matrix $\mathbf A$ is defined by the RoPE method, and the extra parameters are brought by the definition of $\mathbf A$.
Time complexity of 
1) calculating the element-wise product over each axis is $O(n\times h \times \frac{d}{hb} \times b^2) = O(ndb)$;
2) calculating sum of $M$ is $O(N\times n \times h \times \frac{d}{hb} \times b^2) = O(ndbN)$;
3) calculating matrix exponential is $O(n\times h \times \frac{d}{hb} \times b^3 = O(ndb^2)$ based on \cite{MatrixExponential};
4) applying rotation is $O(n\times h \times (\frac{d}{h})^2) = O(\frac{nd^2}{h})$.
Thus, the overall time complexity of rotation is $O(ndbN+ndb^2+\frac{nd^2}{h})$.

\subsubsection{Vanilla RoPE}

No extra parameters are presented in vanilla RoPE, and the angle base matrix $\mathbf A$ can be calculated during pre-processing.
Besides, in vanilla RoPE, block size $b=2$, so $\frac{d}{h} \gg bN + b^2 = 2N + 4$ in most cases.
Thus, count of extra parameters are $0$ and extra time complexity is  $O(ndbN+ndb^2+\frac{nd^2}{h})\approx O(\frac{nd^2}{h})$ where $b=2$.

\subsubsection{LieRE}

For LieRE, the angle base matrix can be formulated as $\mathbf A =\mathbf P -\mathbf P^\top$ where the parameters in $\mathbf P$ are all independent. 
The only extra step to get $\mathbf A$ from $\mathbf P$ is the subtraction whose time complexity is $O(Ndb)$.
Thus, count of extra parameters are $N\times h \times \frac{d}{hb} \times b^2 = Ndb$ and extra time complexity is  $O(ndbN+ndb^2+\frac{nd^2}{h} + ndb) = O(ndbN+ndb^2+\frac{nd^2}{h})$.

\subsubsection{\method-AP}

For \method-AP, we compose the angle base matrix $\mathbf A$ whose shape is $(N, h, \frac{d}{hb}, b, b)$ with matrices with shape $(N, h, \frac{d}{hbN}, b, b)$ by filling the blocks that are irrelevant to the corresponding coordinate axes with zeros.
Thus, similarly, count of extra parameters are $N\times h \times \frac{d}{hbN} \times b^2 = db$ and extra time complexity is $O(ndbN+ndb^2+\frac{nd^2}{h})$.

\begin{figure*}[t]
    \centering
    \begin{subfigure}{0.48\linewidth}
        \centering
        \includegraphics[width=\linewidth]{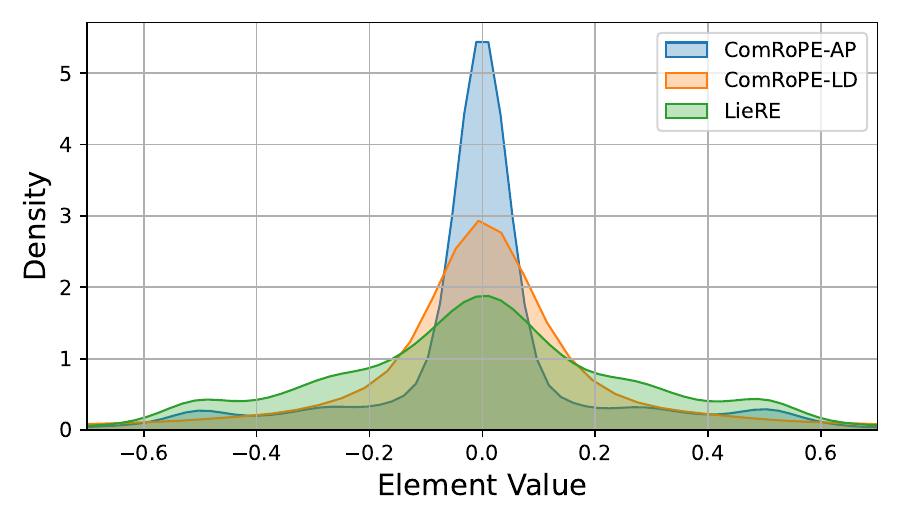}
        \vspace{-2em}
        \caption{Block Size = 2, linear scale}
    \end{subfigure}
    \begin{subfigure}{0.48\linewidth}
        \centering
        \includegraphics[width=\linewidth]{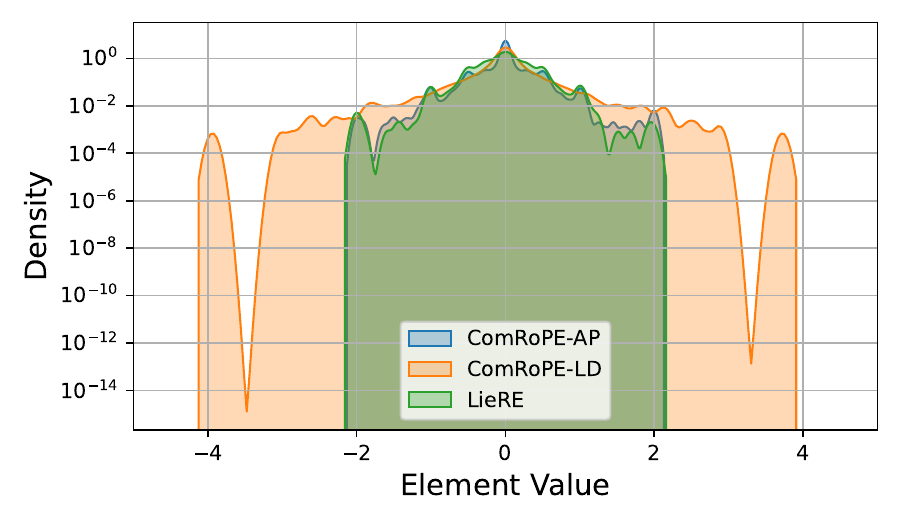}
        \vspace{-2em}
        \caption{Block Size = 2, log scale}
    \end{subfigure}
    \begin{subfigure}{0.48\linewidth}
        \centering
        \includegraphics[width=\linewidth]{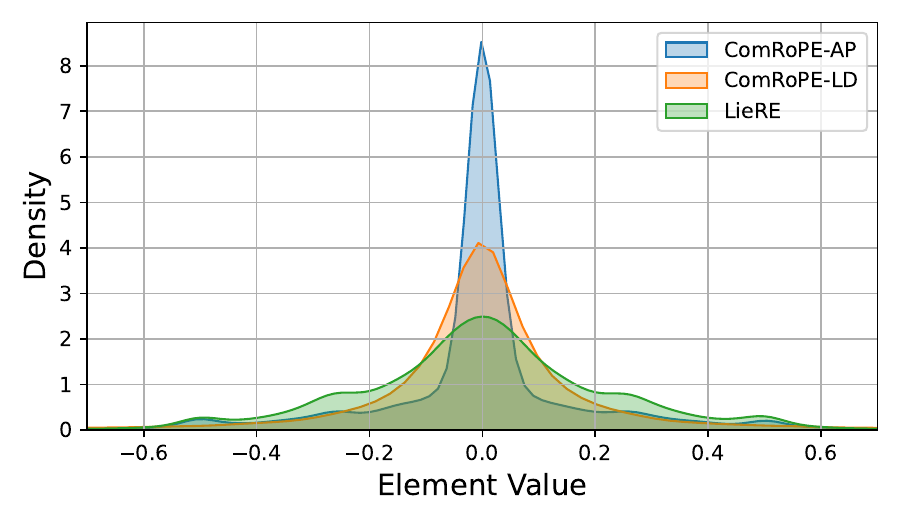}
        \vspace{-2em}
        \caption{Block Size = 4, linear scale}
    \end{subfigure}
    \begin{subfigure}{0.48\linewidth}
        \centering
        \includegraphics[width=\linewidth]{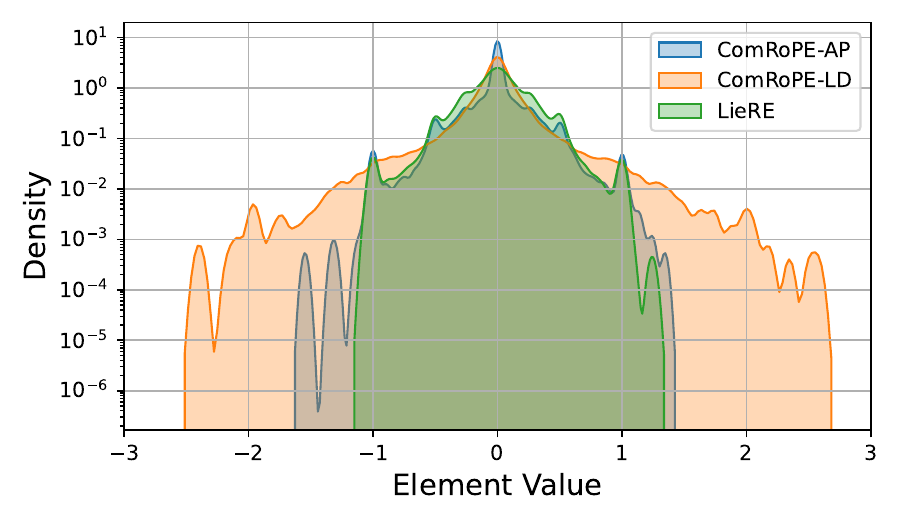}
        \vspace{-2em}
        \caption{Block Size = 4, log scale}
    \end{subfigure}
    \begin{subfigure}{0.48\linewidth}
        \centering
        \includegraphics[width=\linewidth]{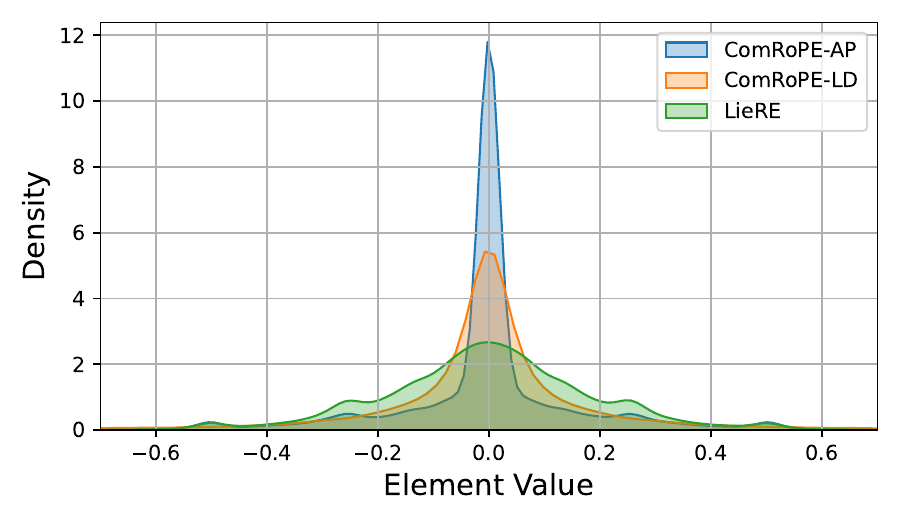}
        \vspace{-2em}
        \caption{Block Size = 8, linear scale}
    \end{subfigure}
    \begin{subfigure}{0.48\linewidth}
        \centering
        \includegraphics[width=\linewidth]{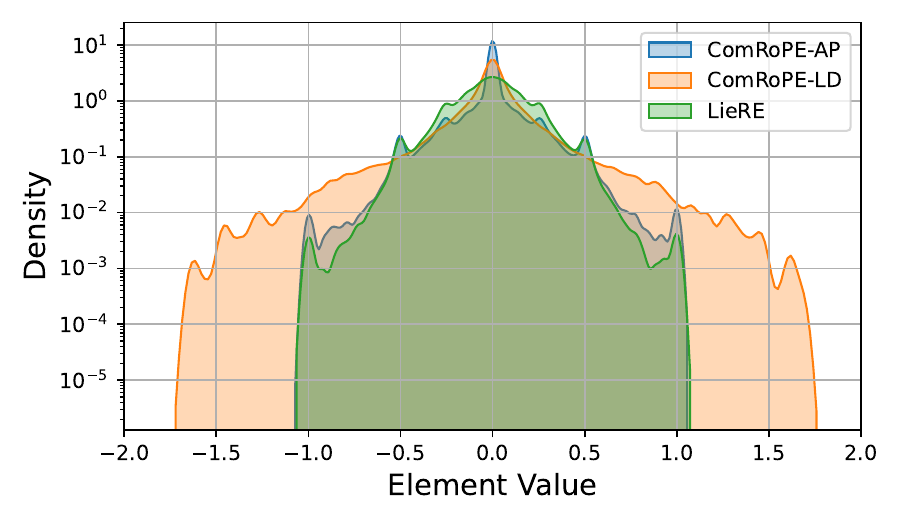}
        \vspace{-2em}
        \caption{Block Size = 8, log scale}
    \end{subfigure}
    \caption{Density distribution of elements in the upper triangular sections of angle matrices from 2D classification experiments. Subfigures (a-b), (c-d), and (e-f) show the distributions for different block sizes: 2, 4, and 8, respectively.}
    \label{fig:2d-cls-dist}
\end{figure*}

\subsubsection{\method-LD}

For \method-LD, the angle base matrices in $\mathbf A$ are pairwise linearly dependent on the first dimension (i.e., axis dimension).
Therefore, it can be presented by a matrix with shape $(h, \frac{d}{hb}, b, b)$ and a multiplication factor with shape $(N, h, \frac{d}{hb})$ by a multiplication step with time complexity $O(N\times h\times \frac{d}{hb}\times b^2) = O(Ndb)$.
Thus, count of extra parameters are $h \times \frac{d}{hb} \times b^2 + N \times h \times \frac{d}{hb} = d(b + \frac{N}{b})$, and extra time complexity is $O(ndbN+ndb^2+\frac{nd^2}{h})$.

\section{Distribution of elements in angle matrices}

\begin{table}[t]
    \centering
    \begin{tabular}{c|c|c}
    \toprule
       Method & \makecell[c]{Block\\Size} & \makecell[c]{Standard\\Deviations} \\ \midrule
        LieRE & \multirow{3}{*}{2} & 0.326 \\
        \method-AP & &  0.271 \\
        \method-LD & &  0.384 \\ \midrule
        LieRE & \multirow{3}{*}{4} & 0.246 \\
        \method-AP & & 0.208 \\
        \method-LD & & 0.278 \\ \midrule
        LieRE & \multirow{3}{*}{8} & 0.195 \\
        \method-AP & & 0.171 \\
        \method-LD & & 0.238 \\ \bottomrule
    \end{tabular}
    \caption{The standard deviations of elements in angle matrices obtained from the 2D classification experiments.}
    \label{tab:2d-cls-dist}
\end{table}

In this section, we analyze the element distribution in angle matrices obtained from the 2D classification experiments. Specifically, we extract all elements from the upper triangular parts of the matrices. The standard deviations of these elements are summarized in Table \ref{tab:2d-cls-dist}, and their density plot is presented in Figure \ref{fig:2d-cls-dist}.

To provide a clearer view of the long-tail distribution, we present the density plot using both linear and logarithmic scales in Figure \ref{fig:2d-cls-dist}. From the linear scale plot, it can be observed that elements near zero exhibit the highest variance in the angle matrices of LieRE, while \method-AP demonstrates the most moderate variance. On the other hand, the logarithmic scale reveals notable differences in range. For instance, \method-LD retains a broader distribution at values farther from zero. Consequently, as indicated in Table \ref{tab:2d-cls-dist}, \method-LD exhibits the largest overall variance among the angle matrix elements. This phenomenon is likely due to the linear dependencies between angle matrices across different coordinate axes, which necessitate significant frequency differences to distinguish them effectively.

    \section{More Analysis}

\noindent\textbf{Computational complexity and time consumption.}
Time consumption is shown in Table~\ref{tab:comp_cost}. While small block sizes should have minimal impact, parallel optimization issues in \texttt{torch.matrix\_exp} lower GPU utilization, increasing time costs for LieRE and ComRoPE.

\begin{table}[h]
    \centering
    \resizebox{0.98\linewidth}{!}{
        \begin{tabular}{c|cccccc}
            \toprule
            Method & \makecell[c]{Vanilla\\RoPE} & \makecell[c]{LieRE\\block = $8\times8$} & \makecell[c]{ComRoPE-LD\\block = $4\times4$} & \makecell[c]{ComRoPE-AP\\block = $4\times4$} & \makecell[c]{ComRoPE-LD\\block = $8\times8$} & \makecell[c]{ComRoPE-AP\\block = $8\times8$} \\ \midrule
            % Parameters & \\
            Training Time per Epoch (min) & 16 & 21 & 19 & 19 & 20 & 20 \\
            Inference Time on Valid Split (s) & 32 & 36 & 35 & 34 & 35 & 35 \\
            \bottomrule
        \end{tabular}
    }
    \caption{Computational costs compared on A800$\times 4$.}
    \label{tab:comp_cost}
    \vspace{-0.9em}
\end{table}

\noindent\textbf{Application to LLMs.}
ComRoPE can be incorporated into large language models as a \emph{drop-in} substitute for the rotary position embeddings used in most pre-trained checkpoints, requiring no extra architectural changes during fine-tuning.
Because language modeling operates along a single sequence dimension, the commuting property of our angle matrices holds automatically.
At present, however, the \texttt{torch.matrix\_exp} implementation incurs substantial memory overhead on large models, making end-to-end training prohibitively expensive.
Addressing this bottleneck, thereby unlocking full-scale LLM experiments, remains a key priority for future work.

\noindent\textbf{Implementation with sota codebase and settings.}
Our work compares RoPE designs under consistent settings to highlight relative advantages, as demonstrated by our experiments. 
For a more thorough and  convincing comparison, we conduct additional experiments in the RoPE-Mixed codebase with DeiT data augmentation (C.f. Table~\ref{tab:deit_results}).

\begin{table}[h]
    \centering
    \resizebox{0.98\linewidth}{!}{

\begin{tabular}{c|cccccccc}
\toprule
\multirow{2}{*}{\textbf{\makecell[c]{Position Encoding\\Method}}} & \multicolumn{8}{c}{\textbf{Evaluation Resolution}}             \\

  & \textbf{128}   & \textbf{192}   & \textbf{224}   & \textbf{256}   & \textbf{320}   & \textbf{384}   & \textbf{448}   & \textbf{512}   \\ \midrule
\textbf{RoPE-Mixed}                                & {\ul 68.99} & 79.75          & 81.42          & 82.31          & {\ul 82.75}    & {\ul 82.11}    & {\ul 80.61}    & 78.39          \\
\textbf{ComRoPE-AP}                                & 68.48          & \textbf{80.94} & \textbf{82.01} & {\ul 82.59}    & 82.43          & 81.65          & 80.58          & \textbf{79.75} \\
\textbf{ComRoPE-LD}                                & \textbf{69.88}    & {\ul 79.91}    & {\ul 81.78}    & \textbf{83.24} & \textbf{83.36} & \textbf{82.32} & \textbf{80.79} & {\ul 78.97}    \\ 
\bottomrule
\end{tabular}

    }
    \caption{Results with DeiT recipe. RoPE-Mixed corresponds to ComRoPE-LD with $2\times2$ blocks. ComRoPE-LD consistently outperforms RoPE-Mixed.}
    \label{tab:deit_results}
    \vspace{-0.9em}
\end{table}

\section{Limitations}

Despite the merits of our approach, it has two notable constraints that call for further investigation.  
The first one is computational overhead. Our implementation depends on \texttt{torch.matrix\_exp}, which is slow and memory-intensive on large models. Cutting training time and GPU memory use is therefore an urgent engineering goal.
And the other is strict commutativity restrictions. We currently require relatively strong conditions for the angle matrices to commute, which may restrict the expressiveness of the resulting embeddings. Identifying weaker—yet still sufficient—conditions could broaden the method’s capacity and applicability.  
Addressing these two issues will be the cornerstone of our future work, paving the way for more efficient training and richer modeling flexibility.

}

% WARNING: do not forget to delete the supplementary pages from your submission 
% \input{sec/X_suppl}

\end{document}